\theoremstyle{plain}
\newtheorem{theorem}{Theorem}[section]
\newtheorem{lemma}[theorem]{Lemma}
\newtheorem{assumption}{Assumption}
\newcommand{\expect}{\mathbb{E}}
\newcommand{\states}{\mathcal{S}}
\newcommand{\trans}{P}
\newcommand{\actions}{\mathcal{A}}
\newcommand{\mdp}{M}
\newcommand{\uniform}{\mathsf{Uniform}}
\theoremstyle{definition}
\newtheorem{remark}{Remark}
\newenvironment{itemize*}%
{\begin{itemize}[leftmargin=*,topsep=0pt]%
		\setlength{\itemsep}{0pt}%
		\setlength{\parskip}{0pt}}%
	{\end{itemize}}
\title{What are the Statistical Limits of Offline RL\\ with Linear Function Approximation?}
\date{}
\author{
	 Ruosong Wang\thanks{Research performed while the author was an intern at Microsoft Research.}\\
	 Carnegie Mellon University \\
	 \texttt{ruosongw@andrew.cmu.edu}
	 \and
	 Dean P. Foster\\
	 University of Pennsylvania and Amazon\\
	 \texttt{dean@foster.net}
	 \and
Sham M. Kakade \\	
University of Washington, Seattle and Microsoft Research\\
\texttt{sham@cs.washington.edu}
	}
\begin{document}
\maketitle
\begin{abstract}
Offline reinforcement learning seeks to utilize offline (observational) data to guide the learning of (causal) sequential decision making strategies. The hope is that offline reinforcement learning coupled with function approximation methods (to deal with the curse of dimensionality) can provide a means to help alleviate the excessive sample complexity burden in modern sequential decision making problems. However, the extent to which this broader approach can be effective is not well understood, where the literature largely consists of sufficient conditions.

This work focuses on the basic question of what are necessary representational and distributional conditions that permit provable sample-efficient offline reinforcement learning. Perhaps surprisingly, our main result shows that even if: i) we have realizability in that the true value function of \emph{every} policy is linear in a given set of features and 2) our off-policy data has good  coverage over all features (under a strong spectral condition), then any algorithm still (information-theoretically) requires a number of offline samples that is exponential in the problem horizon in order to non-trivially estimate the value of \emph{any} given policy. Our results highlight that sample-efficient offline policy evaluation is simply not possible unless significantly stronger conditions hold; such conditions include either having low distribution shift (where the offline data distribution is close to the distribution of the policy to be evaluated) or significantly stronger representational conditions (beyond realizability).
\end{abstract}

\section{Introduction}

Offline methods (also known as off-policy methods or batch methods)  are a promising methodology to alleviate the sample
complexity burden in challenging reinforcement learning (RL) settings,
particularly those where sample efficiency is
paramount~\citep{mandel2014offline,gottesman2018evaluating,Wang2018SupervisedRL,8904645}.
Off-policy methods are often applied together with function
approximation schemes; such methods take sample transition data and reward values as inputs,
and approximate the value of a target policy or the value function of
the optimal policy.  
Indeed, many practical deep RL algorithms find their prototypes in the literature of offline RL.
For example, when running on off-policy data (sometimes termed as
``experience replay''), deep $Q$-networks (DQN)~\citep{mnih2015human}
can be viewed as an analog of Fitted
$Q$-Iteration~\citep{gordon1999approximate} with neural networks being the function approximators.  
More recently, there are an increasing number of both
model-free~\citep{Laroche2019,fujimoto2019off,jaques2020way,kumar2019stabilizing,agarwal2020striving}
and model-based~\citep{DBLP:conf/icml/RossB12,kidambi2020morel} offline
RL methods, with steady improvements in
performance~\citep{fujimoto2019off,kumar2019stabilizing,wu2020behavior,kidambi2020morel}.  

However, despite the importance of these methods, the extent to which
data reuse is possible, especially when off-policy methods are
combined with function approximation, is not well understood.  For
example, deep $Q$-network requires millions of samples to solve
certain Atari games~\citep{mnih2015human}. Also important is that in some safety-critical settings, we seek guarantees when offline-trained policies can be effective~\citep{pthomas,Thomas999}.  A basic question
here is that if there are fundamental statistical limits on such methods, where 
sample-efficient offline RL is simply not possible without further
restrictions on the problem.

In the context of supervised learning, it is well-known that empirical risk minimization is sample-efficient if the hypothesis class has bounded complexity.
For example, suppose the agent is given a $d$-dimensional feature
extractor, and the ground truth labeling function is a (realizable) linear function
with respect to the feature mapping.
Here, it is well-known that a polynomial number of samples
in $d$ suffice for a given target accuracy. Furthermore, in this realizable case, provided the 
training data has a good feature coverage, then  
we will have good accuracy against any test
distribution.\footnote{Specifically, if the features have a uniformly bounded
  norm and if the minimum eigenvalue of
the feature covariance matrix of our data is bounded away from $0$,
say by $1/\textrm{poly}(d)$, then we have good accuracy on any test distribution. See Assumption~\ref{assmp:coverage} and the
comments thereafter.}

In the more challenging offline RL setting, it is unclear
if sample-efficient methods are possible, even under analogous assumptions.
This is our motivation to consider the following question:
\begin{center}
	\textbf{What are the statistical limits for offline RL with linear function approximation?}
\end{center}
Here, one may hope that value estimation for a given policy is
possible in the offline RL setting under the analogous set of
assumptions that enable sample-efficient supervised learning, i.e., 1)
(realizability) the features can perfectly represent the value
functions and 2) (good coverage) the feature
covariance matrix of our off-policy data has lower bounded
eigenvalues.  

The extant body of provable methods on offline RL
either make representational assumptions  that are far stronger than realizability
or assume distribution shift conditions that are far stronger
than having coverage with regards to the spectrum of the
feature covariance matrix of the data distribution.  For example,
\citet{szepesvari2005finite} analyze offline RL methods by assuming
a representational condition where the features satisfy (approximate) closedness under Bellman updates, which is a far
stronger representation condition than realizability.  Recently,
\citet{xie2020batch} propose a offline RL algorithm that only requires
realizability as the representation condition.  However, the algorithm
in~\citep{xie2020batch} requires a more stringent data
distribution condition.   Whether it is possible to design a
sample-efficient offline RL method under the realizability assumption
and a reasonable data coverage assumption --- an open problem
in~\citep{chen2019information} --- is the focus of this work.

\paragraph{Our Contributions.}
Perhaps surprisingly, our main result shows that, under only the above
two assumptions, it is information-theoretically not possible to
design a sample-efficient algorithm to non-trivially estimate the value
of a given policy.
The following theorem is an informal version of the result in Section~\ref{sec:hard_det}. 

\begin{theorem}[Informal]
In the offline RL setting, suppose the data distributions have (polynomially) lower bounded eigenvalues, 
and the $Q$-functions of \emph{every} policy are linear
with respect to a given feature mapping.
Any algorithm requires an exponential number of samples in the horizon
$H$ to output a
non-trivially accurate estimate of the value of \emph{any} given policy $\pi$, with constant probability.  
\end{theorem}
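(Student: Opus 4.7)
The plan is a standard hypothesis-testing (indistinguishability) lower bound built around a family of hard MDPs that all satisfy the realizability and coverage hypotheses. I would construct a family $\{M_\theta\}_{\theta \in \Theta}$ with $|\Theta|$ exponential in $H$, sharing a common state-action space, a common feature map $\phi : \states \times \actions \to \mathbb{R}^d$ with $d = \mathrm{poly}(H)$, and a common offline distribution $\mu$; only a hidden ``signature'' --- for instance a rewarding trajectory in a depth-$H$ branching structure, or a hidden bit pattern governing terminal rewards --- distinguishes members of the family. The distribution $\mu$ would be fixed and $\theta$-independent, spread so that $\Sigma_\mu = \expect_{(s,a)\sim\mu}[\phi(s,a)\phi(s,a)^\top]$ has $\lambda_{\min}(\Sigma_\mu) \geq 1/\mathrm{poly}(d)$, meeting the coverage assumption mechanically.

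The linchpin is engineering $\phi$ so that for every policy $\pi$ and every $\theta$ the $Q$-function $Q^\pi_{M_\theta}$ lies in $\mathrm{span}(\phi)$. I would draw the hidden signature from a combinatorial family whose induced value functions live in a common low-dimensional subspace --- for example, a tensor-power or parity structure on $\{0,1\}^H$, or hitting-probability vectors indexed by a sparse family of reward cells --- so that a single $\mathrm{poly}(H)$-dimensional basis expresses $Q^\pi_{M_\theta}$ for every $(\pi, \theta)$. Verifying all-policy realizability then reduces to a closed-form check that the coefficients of $Q^\pi_{M_\theta}$ in this basis are well-defined functions of $(\pi, \theta)$, independent of the specific policy chosen.

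With the family in hand, the lower bound splits into two pieces. First, I would show that $\{V^\pi_{M_\theta}\}_{\theta \in \Theta}$ spans an $\Omega(1)$ interval for the target policy $\pi$, so that even constant-accuracy estimation forces the algorithm to effectively discriminate $\theta$. Second, I would argue that under $\mu$ each offline sample $(s,a,r,s')$ carries information about $\theta$ only on an event of probability $\exp(-\Omega(H))$, since the hidden signature is concealed inside exponentially rare transitions or rewards; the coverage assumption refers to second moments of $\phi$ and does \emph{not} prevent individual ``informative'' events from being exponentially unlikely. A union bound over $N$ samples, combined with Le Cam's two-point method (or Fano's inequality over the full family), then yields that any estimator achieving non-trivial accuracy with constant probability must consume $N = \exp(\Omega(H))$ samples.

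The principal obstacle is the middle step: simultaneously satisfying \emph{all-policy} linear realizability in $\mathrm{poly}(H)$ dimensions, preserving an $\Omega(1)$ gap in policy values, and keeping the distinguishing signal exponentially suppressed under $\mu$. These three desiderata pull against each other --- a richer hidden signal tends to inflate the dimension required for realizability, while shrinking the dimension tends to wash out the value gap or expose the hidden signature to $\mu$. I expect the successful construction to exploit an algebraic or combinatorial code (parities, Walsh--Hadamard-like bases, or low-degree polynomials over $\mathbb{F}_2^H$) whose value functions collapse into a controllable span despite exponentially many latent parameters, together with an explicit offline distribution $\mu$ that achieves full-rank coverage while leaving the informative event exponentially rare.
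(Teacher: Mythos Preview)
Your high-level template---reduce to hypothesis testing and invoke Le Cam---is right, but your proposed mechanism for hiding the hypothesis is off-track, and you are missing the paper's central construction idea.

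The paper uses only \emph{two} hypotheses, indexed by a scalar $r_0\in\{0,\hat d^{-H/2}\}$ with $\hat d=d/2$; no exponential family or Fano is needed. More importantly, the information about $r_0$ is not hidden through exponentially \emph{rare} events under $\mu$, but through an exponentially \emph{small signal} present in every terminal sample: the level-$H$ rewards at $\mu_H$-supported states are $\pm 1$ with bias $r_0$, so distinguishing the two hypotheses is a coin-bias problem requiring $\Omega(\hat d^{H})$ samples. Your rare-event route actually cannot work here. Coverage forces the $\mu$-supported features to span $\mathbb{R}^d$, so realizability pins $Q_h$ at \emph{every} state-action to a fixed linear combination of $Q_h$-values at $\mu$-supported state-actions; a short backward induction from $h=H$ then shows that if rewards and transitions at $\mu$-supported state-actions are $\theta$-independent at every level, $Q^\pi$ is $\theta$-independent everywhere and the value gap collapses to zero. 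Some leak of the hidden parameter into $\mu$-supported observations is therefore \emph{forced} by realizability plus coverage, and the paper simply makes that leak exponentially faint rather than trying to eliminate it.

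The device that converts the tiny bias $r_0$ into an $\Omega(1)$ value gap at $s_1$ is geometric amplification through the feature map, not a combinatorial code. At each level the construction places an ``aggregator'' state $s_h^{\hat d+1}$ with feature $\hat d^{-1/2}\sum_{c\le\hat d}e_c$, excluded from $\mu_h$ (its feature is already in the span of $e_1,\dots,e_{\hat d}$, so the covariance is exactly $\tfrac{1}{d}I$). Linearity of $Q^\pi$ then forces the aggregator's $Q$-value to equal $\sqrt{\hat d}$ times that of the base states; chaining aggregators across $H$ levels multiplies the terminal signal by $\hat d^{H/2}$, so $r_0=\hat d^{-H/2}$ yields $V^\pi(s_1)=1$ versus $0$. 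All-policy realizability is then a two-line check, the state space has size $O(dH)$, and $|\mathcal{A}|=2$---no parity, tensor, or Walsh--Hadamard structure is involved. Your worry that a ``richer hidden signal inflates the dimension'' is well-founded for your plan but moot for the paper's: only one bit is hidden, and the feature geometry does all the amplification.
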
 

This hardness result states that even if the $Q$-functions of
\emph{all} polices are linear with respect to the given feature mapping,
we still require an exponential number of samples to evaluate \emph{any}
given policy.   Note that this representation condition is
significantly stronger than assuming realizability with regards
to only a single target policy; it assumes realizability for all policies. 
Regardless, even under this stronger representation condition, it is
hard to evaluate any policy, as specified in our hardness result.  

This result also formalizes a key issue in offline reinforcement learning with function approximation: geometric error amplification.
To better illustrate the issue, in
Section~\ref{sec:upper}, we analyze the classical Least-Squares Policy Evaluation
(LSPE) algorithm under the realizability assumption, which demonstrates how the error propagates as the algorithm proceeds. 
Here, our analysis shows that, if we only rely on the realizability assumption, then a far more stringent condition is required for sample-efficient offline policy evaluation:  the off-policy data distribution must be quite close to the distribution induced by the policy to be evaluated.  


Our results highlight that sample-efficient offline RL is simply not
possible unless either the distribution shift condition is
sufficiently mild or we have stronger representation conditions that
go well beyond realizability.
See Section~\ref{sec:upper} for more details. 

Furthermore, our hardness result implies an exponential separation on the sample complexity between offline RL and supervised learning, since supervised learning (which is equivalent to offline RL with $H = 1$) is possible with polynomial number of samples under the same set of assumptions. 

A few additional points are worth emphasizing with regards to
our lower bound construction:
\begin{itemize}
\item  Our results imply that 
  Least-Squares Policy Evaluation (LSPE, i.e., using Bellman backups with linear regression) will fail. Interestingly,  while LSPE 
will provide an unbiased estimator, 
our results imply that it  will have exponential variance in the problem horizon. 
\item Our construction is simple and does not rely on having a large
  state or action space: the size of the state space is only $O(d\cdot
  H)$ where $d$ is the feature dimension and $H$ is the planning
  horizon, and the size of the  action space is only is $2$. This stands in contrast
  to other RL lower bounds, which typically require state spaces that
  are exponential in the problem horizon (e.g. see~\citep{Du2020Is}). 
\item  We provide two hard instances, one with 
  a sparse reward (and stochastic transitions) and another with
  deterministic dynamics (and stochastic rewards). These two hard
  instances jointly imply that both the estimation error on reward
  values and the estimation error on the transition probabilities
  could be geometrically amplified in offline RL.  
\item Of possibly broader interest is that our hard instances are, to
  our knowledge, the first concrete examples showing that geometric
  error amplification is real in RL problems (even with
  realizability). While this is a known concern in the analysis of RL
  algorithms, there have been no concrete examples exhibiting such
  behavior under only a realizability assumption. 
\end{itemize}

\section{Related Work}

We now survey prior work on offline RL, largely focusing on theoretical results.
We also discuss results on the error amplification issue in RL. 
Concurrent to this work,~\citet{xie2020batch} propose a offline RL
algorithm under the realizability assumption, which requires stronger
distribution shift conditions. We will discuss this work shortly.   

\paragraph{Existing Algorithms and Analysis.}
Offline RL with value function approximation is closely related to Approximate Dynamic Programming~\citep{bertsekas1995neuro}.
Existing works~\citep{munos2003error, szepesvari2005finite, antos2008learning, munos2008finite, tosatto2017boosted, xie2020q} that analyze the sample complexity of approximate dynamic programming-based approaches usually make the following two categories of assumptions: (i) representation conditions that assume the function class approximates the value functions well and (ii) distribution shift conditions that assume the given data distribution has sufficient coverage over the state-action space. 
As mentioned in the introduction, the desired representation condition would be realizability, which only assumes the value function of the policy to be evaluated lies in the function class (for the case of offline policy evaluation) or the optimal value function lies in the function class (for the case of finding near-optimal policies), and existing works usually make stronger assumptions.
For example, \citet{szepesvari2005finite} assume (approximate) closedness under Bellman updates, which is much stronger than realizability. 
Whether it is possible to design a sample-efficient offline RL method under the realizability assumption and reasonable data coverage assumption, is left as an open problem in~\citep{chen2019information}.

To measure the coverage over the state-action space of the given data distribution, existing works assume the concentratability coefficient (introduced by~\citet{munos2003error}) to be bounded. 
The concentratability coefficient, informally speaking, is the largest possible ratio between the probability for a state-action pair $(s, a)$ to be visited by a policy, and the probability that $(s, a)$ appears on the data distribution. 
Since we work with linear function approximation in this work, we measure the distribution shift in terms of the spectrum of the feature covariance matrices (see Assumption~\ref{assmp:coverage}), which is a well-known sufficient condition in the context of supervised learning and is much more natural for the case of linear function approximation. 

Concurrent to this work, \citet{xie2020batch} propose an algorithm that works under the realizability assumption instead of other stronger representation conditions used in prior work. 
However, the algorithm in~\citep{xie2020batch} requires a much stronger data distribution condition which assumes a stringent version of concentrability coefficient introduced by~\citep{munos2003error} to be bounded. 
In contrast, in this work we measure the distribution shift in terms of the spectrum of the feature covariance matrix of the data distribution, which is more natural than the concentrability coefficient for the case of linear function approximation. 

Recently, there has been great interest in applying importance sampling to approach offline policy evaluation~\citep{precup2000eligibility}.
For a list of works on this topic, see~\citep{dudik2011doubly, mandel2014offline, thomas2015high, li2015toward, jiang2016doubly, thomas2016data, guo2017using, wang2017optimal, liu2018breaking, farajtabar2018more, xie2019towards, kallus2019efficiently, liu2019understanding, uehara2019minimax, kallus2020double, jiang2020minimax, feng2020accountable}.
Offline policy evaluation with importance sampling incurs exponential
variance in the planning horizon when the behavior policy is
significantly different from the policy to be evaluated.  Bypassing such exponential dependency requires non-trivial function approximation assumptions~\citep{jiang2020minimax, feng2020accountable, liu2018breaking}.
Finally, ~\citet{kidambi2020morel} provides a model-based offline RL algorithm, with a theoretical analysis based on hitting times.

\paragraph{Hardness Results.}
Historically, algorithm-specific hardness results have been known for a long time in the literature of  Approximate Dynamic Programming. 
See Chapter 4 in~\citep{van1994feature} and also~\citep{gordon1995stable, tsitsiklis1996analysis}.
These works demonstrate that certain approximate dynamic programming-based methods will diverge on hard cases.
However, such hardness results only hold for a restricted class of algorithms, and to demonstrate the fundamental difficulty of offline RL, it is more desirable to obtain information-theoretic lower bounds, as recently initiated by~\citet{chen2019information}.

Existing (information-theoretic) exponential lower bounds~\citep{krishnamurthy2016pac, sun2017deeply, chen2019information} usually construct unstructured MDPs with an exponentially large state space.
\citet{Du2020Is} prove an exponential lower bound for planning under the assumption that the optimal $Q$-function is approximately linear.  
The condition that the optimal $Q$-function is only {\em approximately} linear is crucial for the correctness of the hardness result in \citet{Du2020Is}
The techniques in~\citep{Du2020Is} are later generalized to other settings, including~\citep{kumar2020discor, wang2020reward, mou2020sample}.

\paragraph{Error Amplification In RL.}
Error amplification induced by distribution shift and long planning horizon is a known issue in the theoretical analysis of RL algorithms. 
See~\citep{gordon1995stable, gordon1996stable, munos1999barycentric, ormoneit2002kernel, kakade2003sample, zanette2019limiting} for papers on this topic and additional assumptions that mitigate this issue.
Error amplification in offline RL is also observed in empirical works (see e.g.~\citep{fujimoto2019off}).
In this work, we provide the first information-theoretic lower bound showing that geometric error amplification is real in offline RL.

\section{The Offline Policy Evaluation Problem}
Throughout this paper, for a given integer $H$, we use $[H]$ to denote the set $\{1, 2, \ldots, H\}$.
\paragraph{Episodic Reinforcement Learning.}
Let $\mdp =\left(\states, \actions, \trans ,R, H\right)$ be a \emph{Markov Decision Process} (MDP)
where $\states$ is the state space, 
$\actions$ is the action space, 
$\trans: \states \times \actions \rightarrow \Delta\left(\states\right)$ is the transition operator which takes a state-action pair and returns a distribution over states, 
$R : \states \times \actions \rightarrow \Delta\left( \mathbb{R} \right)$ is the reward distribution,
$H \in \mathbb{Z}_+$ is the planning horizon.
For simplicity, we assume a fixed initial state $s_1 \in \states$.

A (stochastic) policy $\pi: \states \to \Delta\left(\actions\right)$ chooses an action $a$ randomly based on the current state $s$.
The policy $\pi$ induces a (random) trajectory $s_1,a_1,r_1,s_2,a_2,r_2,\ldots,s_{H},a_{H},r_{H}$,
where $a_1 \sim \pi_1(s_1)$, $r_1 \sim R(s_1,a_1)$, $s_2 \sim \trans(s_1,a_1)$, $a_2 \sim \pi_2(s_2)$, etc.
To streamline our analysis, for each $h \in [H]$, we use $\states_h \subseteq \states$ to denote the set of states at level $h$, and we assume $\states_h$ do not intersect with each other.
We assume, almost surely,  that $r_h \in [-1, 1]$ for all $h \in [H]$.

\paragraph{Value Functions.}
Given a policy $\pi$, $h \in [H]$ and $(s,a) \in \states_h \times
\actions$, the $Q$-function and value function are defined as:
\[
Q_h^\pi(s,a) = \expect\left[\sum_{h' = h}^{H}r_{h'}\mid s_h =s, a_h =
  a, \pi\right], \quad V_h^\pi(s)=\expect\left[\sum_{h' = h}^{H}r_{h'}\mid s_h =s,
  \pi\right]
\]
For a policy $\pi$, we define $V^{\pi} = V^{\pi}_1(s_1)$ to be the
value of $\pi$ from the fixed initial state $s_1$.

\paragraph{Linear Function Approximation.}
When applying linear function approximation schemes, it is commonly assumed that the agent is given a feature extractor $\phi : \states \times \actions \to \mathbb{R}^d$ which can either be hand-crafted or a pre-trained neural network that transforms a state-action pair to a $d$-dimensional embedding, and the $Q$-functions can be predicted by linear functions of the features.
In this paper, we are interested in the following \emph{realizability} assumption.
\begin{assumption}[Realizable Linear Function Approximation] \label{assmp:realizability}
For every policy $\pi :
\states \to \Delta(\actions)$, there exists $\theta_1^{\pi}, \ldots
\theta_H^{\pi} \in \mathbb{R}^d$ such that for all $(s, a) \in \states \times \actions$ and $h \in [H]$,
\[
Q^{\pi}_h(s, a) = \left(\theta_h^{\pi}\right)^{\top}\phi(s, a).
\]
\end{assumption}
Note that our assumption is substantially stronger than 
assuming realizability with regards to a single target policy $\pi$
(say the policy that we
wish to evaluate); our assumption imposes realizability for
all policies.

\paragraph{Offline Reinforcement Learning.}
This paper is concerned with the offline RL setting. 
In this setting, the agent does not have direct access to the MDP and instead is given access to data distributions $\{\mu_h\}_{h = 1}^H$ where for each $h \in [H]$, $\mu_h \in \Delta(\states_h \times \actions)$.
The inputs of the agent are $H$ datasets $\{D_h\}_{h = 1}^H$, and for
each $h \in [H]$, $D_h$ consists i.i.d. samples of the form $(s, a, r, s') \in \states_h
\times \actions \times \mathbb{R} \times \states_{h + 1}$ tuples, where $(s,
a)\sim\mu_h$, $r \sim r(s, a)$, $s' \sim P(s, a)$.

In this paper, we focus on the \emph{offline policy evaluation} problem with linear function approximation: 
given a policy $\pi : \states \to \Delta\left(\actions\right)$ and a feature extractor $\phi : \states \times \actions \to \mathbb{R}^d$, the goal is to output an
accurate estimate of the value of $\pi$ (i.e., $V^{\pi}$) approximately, using the collected
datasets $\{D_h\}_{h = 1}^H$, with as few samples as possible.

\paragraph{Notation.} 
For a vector $x \in \mathbb{R}^d$, we use $\|x\|_2$ to denote its $\ell_2$ norm.
For a positive semidefinite matrix $A$, we use $\|A\|_2$ to denote its operator norm, and $\sigma_{\min}(A)$ to denote its smallest eigenvalue. 
For two positive semidefinite matrices $A$ and $B$, we write $A \succeq B$ to denote the L\"owner partial ordering of matrices, i.e, $A \succeq B$ if and only if $A - B$ is positive semidefinite.
For a policy $\pi : \states \to \Delta\left(\actions\right)$, we use $\mu_h^{\pi}$ to denote the marginal distribution of $s_h$ under $\pi$, i.e., $\mu_h^{\pi}(s) = \Pr[s_h = s \mid \pi]$.
For a vector $x \in \mathbb{R}^d$ and a positive semidefinite matrix $A \in \mathbb{R}^{d \times d}$, we use $\|x\|_A$ to denote $\sqrt{x^{\top}Ax}$.

\section{The Lower Bound:\\ Realizability and Coverage are Insufficient}\label{sec:hard_det}

We now present our main hardness result for offline policy evaluation with linear function approximation.  It should be evident
that without feature coverage in our dataset, realizability alone is clearly not
sufficient for sample-efficient estimation.   Here, we will make the
strongest possible assumption, with regards to the conditioning
of the feature covariance matrix.

\begin{assumption}[Feature Coverage]\label{assmp:coverage}
For all $(s, a) \in \states \times \actions$, assume our feature map
is bounded such that $\|\phi(s, a)\|_2 \le 1$. Furthermore,
suppose for each $h \in [H]$, the data distributions $\mu_h$ satisfy
the following minimum eigenvalue condition:
\[\sigma_{\min}\left(\expect_{(s, a) \sim \mu_h}[\phi(s, a)\phi(s, a)^{\top}]\right) =
1/d.\]
Note that $1/d$ is the largest possible minimum
eigenvalue due to that, for any data distribution $\widetilde\mu_h$,
$\sigma_{\min}(\expect_{(s, a) \sim \widetilde{\mu}_h}[\phi(s, a)\phi(s, a)^{\top}] )\leq
\frac{1}{d}$ since $\|\phi(s, a)\|_2\leq 1$ for all 
$(s,a) \in \states \times \actions$.
\end{assumption}

Clearly, for the case where $H=1$, 
the realizability assumption (Assumption~\ref{assmp:realizability}),
and
feature coverage assumption (Assumption~\ref{assmp:coverage}) 
imply 
that the ordinary least squares estimator will accurately estimate
$\theta_1$.\footnote{For $H=1$, the ordinary least squares estimator will satisfy
that $\|\theta_1-\widehat \theta_{\textrm{OLS}}\|_2^2 \leq
O(d/n)$ with high probability. See e.g.~\citep{hsu2012random}.} Our main result now shows that these
assumptions are not sufficient for offline policy evaluation for long horizon problems.

\begin{theorem}\label{thm:hard_det_1}
Suppose Assumption~\ref{assmp:coverage} holds.
Fix an algorithm that takes as input both a policy and a feature mapping.
There exists a (deterministic) MDP satisfying Assumption~\ref{assmp:realizability}, such that for \emph{any} policy $\pi : \states \to \Delta(\actions)$, 
the algorithm requires $\Omega((d / 2)^H)$ samples to output
the value of $\pi$ up to constant additive approximation error
with probability at least $0.9$. 
\end{theorem}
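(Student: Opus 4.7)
The plan is to establish the lower bound via a two-point (Le~Cam style) indistinguishability argument. Specifically, I will construct a pair of deterministic MDPs $M_+$ and $M_-$ sharing the same state space, action space, feature map $\phi$, and offline data distributions $\{\mu_h\}_{h=1}^H$, such that: (i) both satisfy Assumption~\ref{assmp:realizability}, i.e.\ every policy has a $\phi$-linear $Q$-function in both MDPs; (ii) each $\mu_h$ satisfies Assumption~\ref{assmp:coverage} with minimum feature-covariance eigenvalue exactly $1/d$; (iii) for every policy $\pi$, the values satisfy $|V^\pi_{M_+}-V^\pi_{M_-}|=\Omega(1)$; and (iv) the $n$-sample offline dataset distributions have total variation $o(1)$ whenever $n=o((d/2)^H)$. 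By Le~Cam, these four properties together imply that any estimator failing with probability less than $0.1$ must use $n=\Omega((d/2)^H)$ samples, which is the desired bound. Since the algorithm is fixed first, Yao's principle lets us turn a randomization over $\{M_+,M_-\}$ into a single hard MDP.

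\paragraph{Construction sketch.}
Use $|\actions|=2$ and roughly $d$ live states per level (plus an absorbing sink), with features engineered so that the uniform data distribution $\mu_h$ over $\states_h\times\actions$ has feature covariance $\expect_{\mu_h}[\phi\phi^\top]=\tfrac{1}{d}I$, saturating Assumption~\ref{assmp:coverage}. Pick a unit ``signal'' direction $v^*\in\mathbb{R}^d$ and make the dynamics a linear MDP whose transition measure and reward parameter at each level live in $\operatorname{span}(v^*)$; this automatically gives realizability for every (even stochastic) policy, since $Q^\pi_h$ is then obtained by composing linear operators determined by $v^*$ and $\pi$. The two instances $M_\pm$ are distinguished only by a $\pm 1$ sign in the terminal reward parameter along $v^*$. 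The key design requirement is that the Bellman operator, restricted to the signal subspace, amplifies this $\pm 1$ perturbation by a factor of $d/2$ per layer: the factor $d$ arises because the population covariance has eigenvalue $1/d$ along $v^*$ (so regression-style inversion multiplies by $d$), and the factor $1/2$ arises from the probability mass that any stochastic policy must put on the distinguishing action. Consequently $|V^\pi_{M_+}-V^\pi_{M_-}|\ge c$ for a universal $c>0$, uniformly over all $\pi$.

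\paragraph{Indistinguishability calculation.}
For each level $h$, a single sample $(s,a,r,s')\sim\mu_h$ observed under $M_+$ versus $M_-$ differs only in the component of the conditional law along $v^*$. Because $\mu_h$ places mass only $O(1/d)$ along $v^*$ and the signal at level $h$ has amplitude $O((d/2)^{-(H-h)})$ (scaling backwards from the terminal reward), the per-sample KL divergence between the two laws at level $h$ is $O\big((d/2)^{-2(H-h)}\!/d\big)$. Summing over $H$ levels and $n$ samples per level and applying Pinsker gives $\mathrm{TV}\big(\mathcal{D}^n_{M_+},\mathcal{D}^n_{M_-}\big)=O\big(\sqrt{n/(d/2)^H}\big)$, so distinguishing $M_+$ from $M_-$ with probability $0.9$ requires $n=\Omega((d/2)^H)$.

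\paragraph{Main obstacle.}
The hardest part is engineering the MDP so that Assumption~\ref{assmp:realizability} holds for \emph{every} policy simultaneously (which essentially forces a linear-MDP structure) while still producing a genuine $\Omega(1)$ value gap uniformly over all $\pi$ and exactly the minimum-eigenvalue condition $\sigma_{\min}=1/d$. In particular, one must prevent policies that ``avoid'' the signal direction from collapsing the gap, and one must verify that the geometric amplification factor along $v^*$ is matched by the geometric decay of the signal at the terminal level so that the net per-step amplification is exactly $d/2$ in the lower bound calculation. Carefully bookkeeping these scalings for stochastic policies, and checking that the transition kernel remains a valid probability distribution under the chosen sign $\pm$, will constitute the bulk of the technical work.
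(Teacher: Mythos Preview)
Your high-level scaffold (a two-point Le~Cam argument with geometric amplification along a distinguished direction) is the same as the paper's, but the proposed mechanism has a fatal gap. You write that realizability for \emph{every} policy ``essentially forces a linear-MDP structure,'' and you plan to build the instance as one. But a linear MDP automatically satisfies \emph{policy completeness} (Assumption~\ref{assmp:policy_completeness}): if $P(\cdot\mid s,a)=\langle\phi(s,a),\mu(\cdot)\rangle$ then the Bellman backup of any bounded function is $\phi$-linear. As the paper itself records in Section~\ref{sec:upper}, completeness together with the coverage condition of Assumption~\ref{assmp:coverage} makes LSPE succeed with polynomially many samples, so no $\Omega((d/2)^H)$ lower bound can be built on a linear MDP. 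The paper's construction is \emph{not} a linear MDP; realizability for all policies is verified by a direct calculation (Lemma~\ref{lem:q_linear_1}) rather than inherited from linear-MDP structure.

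Relatedly, the amplification factor does not come from ``regression-style inversion multiplying by $d$'' nor from a $1/2$ action-probability argument. The mechanism you are missing is an out-of-support extrapolation state. At each level the paper places a special state $s_h^{\hat d+1}$ (with $\hat d=d/2$) whose feature is $\tfrac{1}{\sqrt{\hat d}}\sum_{c=1}^{\hat d}e_c$, and this state is \emph{excluded} from $\mu_h$; the data distribution is uniform only over the $d$ pairs $\{(s_h^c,a_j):c\le\hat d,\,j\in\{1,2\}\}$, which already yields covariance $\tfrac{1}{d}I$. The initial state is $s_1^{\hat d+1}$, and from any $s_h^{\hat d+1}$ every action transitions deterministically to $s_{h+1}^{\hat d+1}$, so $V^\pi(s_1)$ is completely policy-independent---this is how the ``policies that avoid the signal'' issue you flagged is eliminated, not via a $1/2$ mass bound. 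Because $\theta_h$ puts equal weight $r_0\hat d^{(H-h)/2}$ on each of $e_1,\dots,e_{\hat d}$, the inner product with the special feature gains a $\sqrt{\hat d}$ factor per level, and the value gap between $r_0=0$ and $r_0=\hat d^{-H/2}$ is exactly $1$. The only randomness sits in the terminal-level Bernoulli rewards on the in-support states, so indistinguishability reduces to telling $\mathrm{Ber}(\tfrac12)$ from $\mathrm{Ber}(\tfrac12+\tfrac12\hat d^{-H/2})$, immediately giving $\Omega(\hat d^{H})$; there is no per-level KL sum as in your sketch. Finally, your proposed ``uniform over $\states_h\times\actions$'' data distribution would put the special state in the support and destroy the argument.
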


 Although we focus on offline policy evaluation in
this work, our hardness result also holds for finding
near-optimal policies under Assumption~\ref{assmp:realizability} in the offline RL setting with linear function approximation.
Below we give a simple reduction.
At the initial state, if the agent chooses action $a_1$, then the agent receives a fixed reward value (say $0.5$) and terminates. 
If the agent chooses action $a_2$, then the agent transits to our hard instance.
Therefore, in order to find a policy with suboptimality at most $0.5$, the agent must evaluate the value of the optimal policy in our hard instance up to an error of $0.5$, and hence the hardness result holds. 

\begin{remark}[The sparse reward case] As stated, the  theorem uses a deterministic
MDP (with stochastic rewards).  See Appendix~\ref{sec:hard_sparse} for another hard
case where the transition is stochastic and the reward is
deterministic and sparse (only occurring at two states at $h=H$).  
\end{remark}

\begin{remark}[Least-Squares Policy Evaluation (LSPE) has exponential variance]
  For offline policy evaluation with linear function approximation, 
  the most na\"ive algorithm here would be LSPE, i.e., using ordinary least squares
  (OLS) to estimate $\theta^{\pi}$, starting at level $h=H$ and then
  proceeding backwards to level $h=1$, using the plug-in estimator
  from the previous level. Here, LSPE will provide an unbiased
  estimate (provided the feature covariance matrices are full rank,
  which will occur with high probability).  Interestingly, as a direct corollary,
  the above theorem implies that LSPE has
  exponential variance in $H$. 
   See Section~\ref{sec:upper} for a more detailed discussion on LSPE.
  More generally, our theorem implies that
  there is no estimator that can avoid such
  exponential dependence in the offline setting. 
\end{remark}
\begin{remark}[Least-Squares Value Iteration (LSVI) versus Least-Squares Policy Iteration (LSPI)]
In the offline setting, under  Assumptions~\ref{assmp:realizability} and ~\ref{assmp:coverage}, 
in order to find a near-optimal policy, the most na\"ive algorithm here would be LSVI, i.e., using ordinary least squares
  (OLS) to estimate $\theta^*$, starting at level $h=H$ and then
  proceeding backwards to level $h=1$, using the plug-in estimator
  from the previous level and the bellman operator. 
  As a corollary, the above theorem implies that LSVI will require an exponential number of samples to find a near-optimal policy. 
  On the other hand, if the regression targets are collected by using
  rollouts (i.e. on-policy sampling) as in
  LSPI~\citep{lagoudakis2003least}, then a polynomial number of samples suffice. See Section D in~\citep{Du2020Is} for an analysis. 
  Therefore, Theorem~\ref{thm:hard_det_1} implies an exponential separation on the sample complexity between LSVI and LSPI. 
  Of course, LSPI requires adaptive data samples and thus does not work in the offline setting. 
\end{remark}

One may wonder if Theorem~\ref{thm:hard_det_1} still holds when the data distributions $\{\mu_h\}_{h = 1}^H$ are induced by a policy.
In Appendix~\ref{sec:hard_sparse}, we prove another exponential sample complexity lower bound under the additional assumption that the data distributions are induced by a fixed policy $\pi$.
However, under such an assumption, it is impossible to prove a hardness result as strong as Theorem~\ref{thm:hard_det_1} (which shows that evaluating {\em any} policy is hard), since one can at least evaluate the policy $\pi$ that induces the data distributions. 
Nevertheless, we are able to prove the hardness of offline policy evaluation, under a weaker version of Assumption~\ref{assmp:realizability}.
See Appendix~\ref{sec:hard_sparse} for more details.

\subsection{Proof of Theorem~\ref{thm:hard_det_1}}

\begin{figure}[!t]
\centering
\includegraphics[width=\linewidth]{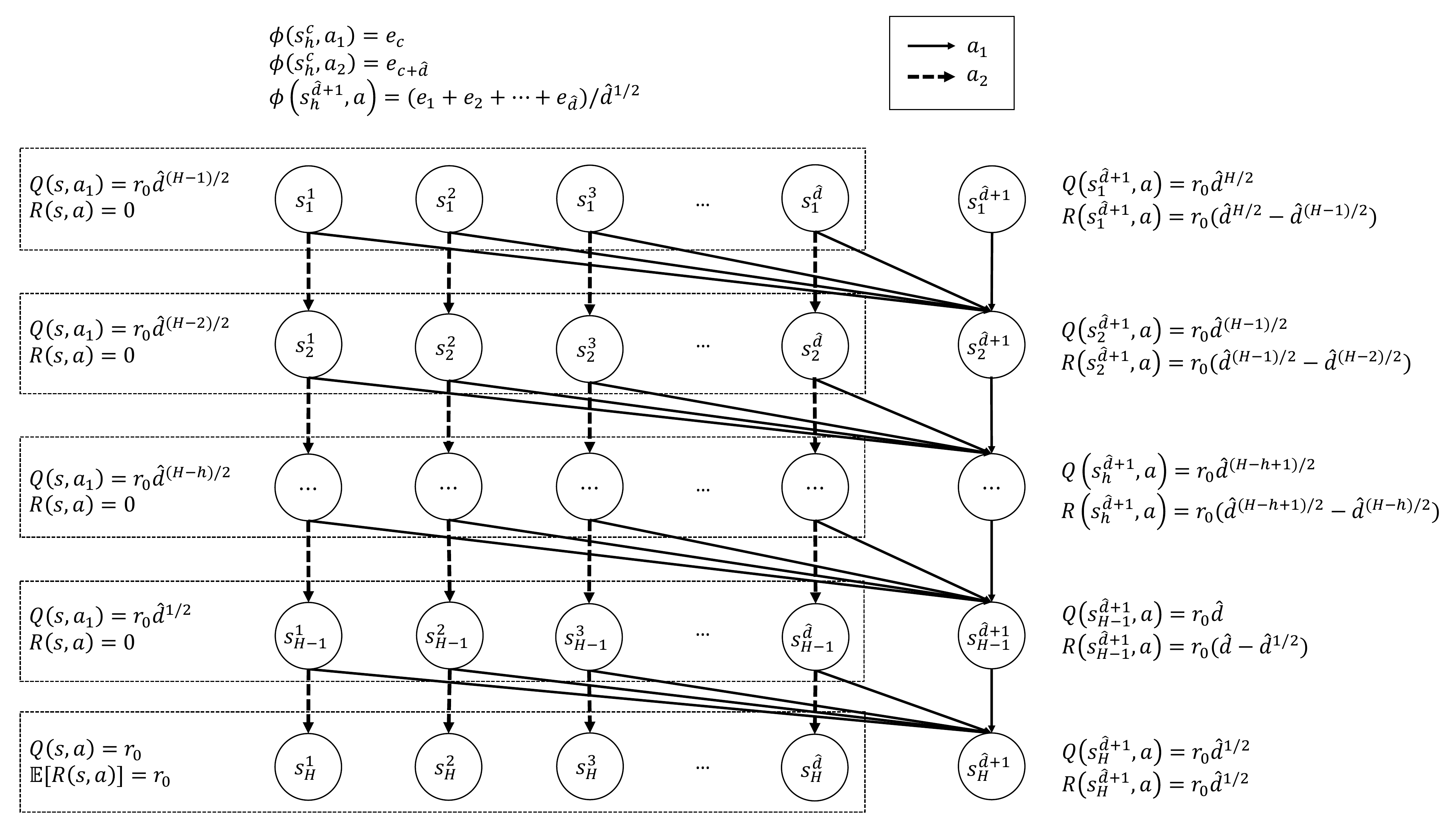}
\caption{An illustration of the hard instance.
Recall that $\hat{d} = d/2$.
States on the top are those in the first level ($h = 1$), while states at the bottom are those in the last level $(h = H)$. 
Solid line (with arrow) corresponds to transitions associated with action $a_1$, while dotted line (with arrow) corresponds to transitions associated with action $a_2$.
For each level $h \in [H]$, reward values and $Q$-values associated with $s_h^1, s_h^2, \ldots, s_h^{\hat{d}}$ are marked on the left, while  reward values and $Q$-values associated with $s_h^{\hat{d} + 1}$ are mark on the right. 
Rewards and transitions are all deterministic, except for the reward distributions associated with $s_H^1, s_H^2, \ldots, s_H^{\hat{d}}$.
We mark the expectation of the reward value when it is stochastic. 
For each level $h \in [H]$, for the data distribution $\mu_h$, the state is chosen uniformly at random from those states in the dashed rectangle, i.e., $\{s_h^1, s_h^2, \ldots, s_h^{\hat{d}}\}$, while the action is chosen uniformly at random from $\{a_1, a_2\}$.
Suppose the initial state is $s_1^{\hat{d} + 1}$.
When $r_0 = 0$, the value of the policy is $0$.
When $r_0 = {\hat{d}}^{-H/2}$, the value of the policy is $r_0 \cdot {\hat{d}}^{H / 2} = 1$.
}
\label{fig:hard1}
\end{figure}

In this section, we give the hard instance construction and the proof of Theorem~\ref{thm:hard_det_1}.
We use $d$ the denote the feature dimension, and we assume $d$ is even for simplicity.
We use $\hat{d}$ to denote $d / 2$ for convenience.
We also provide an illustration of the construction in Figure~\ref{fig:hard1}.
 
\paragraph{State Space, Action Space and Transition Operator.}
The action space $\actions = \{a_1, a_2\}$.
For each $h \in [H]$, $\states_h$ contains $\hat{d} + 1$ states $s_h^1, s_h^2, \ldots, s_h^{\hat{d}}$ and $s_h^{\hat{d} + 1}$. 
For each $h \in [H - 1]$, for each $c \in \{1, 2, \ldots, \hat{d} + 1\}$, we have 
\[
\trans(s_h^c, a) = 
\begin{cases}
s_{h + 1}^{\hat{d} + 1} & a = a_1\\
s_{h + 1}^c & a = a_2
\end{cases}.
\]
\paragraph{Reward Distributions.}
Let $0 \le r_0 \le \hat{d}^{-H/2}$ be a parameter to be determined. 
For each $(h, c) \in [H - 1] \times [\hat{d}]$ and $a \in \actions$, we set $R(s_h^c, a) = 0$ and $R(s_h^{\hat{d} + 1}, a) = r_0 \cdot (\hat{d}^{1/2} - 1) \cdot {\hat{d}}^{(H - h) / 2}$.
For the last level, for each $c \in [\hat{d}]$ and $a \in \actions$, we set 
\[
R(s_H^c, a) = 
\begin{cases}
1 & \text{with probability $(1+r_0)/2$}\\
-1 & \text{with probability $(1 - r_0)/2$}
\end{cases}
\]
so that $\expect[R(s_H^c, a)] = r_0$.
Moreover, for all actions $a \in \actions$, $R(s_H^{\hat{d} + 1}, a) = r_0 \cdot \hat{d}^{1/2}$.

\paragraph{Feature Mapping.}
Let $e_1, e_2, \ldots, e_d$ be a set of orthonormal vectors in $\mathbb{R}^d$.
Here, one possible choice is to set $e_1, e_2, \ldots, e_d$ to be the standard basis vectors. 
For each $(h, c) \in [H] \times [\hat{d}]$, we set 
$\phi(s_h^c, a_1) = e_c, \phi(s_h^c, a_2) = e_{c + \hat{d}}$,
and \[\phi(s_h^{\hat{d} + 1}, a) =\frac{1}{\hat{d}^{1/2}}\sum_{c \in \hat{d}}e_c \] for all $a \in \actions$.
 

\paragraph{Verifying Assumption~\ref{assmp:realizability}.}
Now we verify that Assumption~\ref{assmp:realizability} holds for our construction. 
\begin{lemma}\label{lem:q_linear_1}
For every policy $\pi : \states \to \Delta(\actions)$, for each $h \in [H]$, for all $(s, a) \in \states_h \times \actions$, we have $Q^{\pi}_h(s, a) = \left( \theta_h^{\pi}\right)^{\top}\phi(s, a)$ for some $\theta_h^{\pi} \in \mathbb{R}^d$.
\end{lemma}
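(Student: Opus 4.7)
The plan is to proceed by backward induction on the level $h$, where at each step I construct $\theta_h^\pi$ explicitly and simultaneously track a side invariant about the value at $s_h^{\hat d+1}$ that makes the construction consistent.

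First, observe the structural miracle of the feature map: the $d$ vectors $\{\phi(s_h^c, a_j) : c\in[\hat d],\ j\in\{1,2\}\} = \{e_1,\ldots,e_d\}$ are orthonormal, so once we fix the target values $Q_h^\pi(s_h^c,a_j)$ for $c\in[\hat d]$ and $j\in\{1,2\}$, the vector
\[
\theta_h^\pi \;=\; \sum_{c=1}^{\hat d} Q_h^\pi(s_h^c,a_1)\,e_c \;+\; \sum_{c=1}^{\hat d} Q_h^\pi(s_h^c,a_2)\,e_{c+\hat d}
\]
automatically realizes those $d$ values. The only non-trivial thing to verify is that this same $\theta_h^\pi$ also reproduces the $Q$-value at the special state $s_h^{\hat d+1}$, whose feature $\hat d^{-1/2}\sum_{c\le\hat d} e_c$ lies in the span of the first $\hat d$ basis vectors.

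For the base case $h=H$, both actions from any $s_H^c$ have expected reward $r_0$, while both actions from $s_H^{\hat d+1}$ have reward $r_0 \hat d^{1/2}$; choosing $\theta_H^\pi = r_0\sum_{c=1}^d e_c$ matches the $d$ orthonormal targets, and the induced value at $s_H^{\hat d+1}$ is $\hat d^{-1/2}\cdot \hat d \cdot r_0 = r_0 \hat d^{1/2}$, as required. This also establishes the invariant $V_H^\pi(s_H^{\hat d+1}) = r_0\hat d^{1/2}$. For the inductive step at level $h<H$, the Bellman equation gives $Q_h^\pi(s_h^c,a_1) = V_{h+1}^\pi(s_{h+1}^{\hat d+1})$ (independent of $c$, since $a_1$ deterministically sends every state to $s_{h+1}^{\hat d+1}$) and $Q_h^\pi(s_h^c,a_2) = V_{h+1}^\pi(s_{h+1}^c)$, while $Q_h^\pi(s_h^{\hat d+1},a) = r_0(\hat d^{1/2}-1)\hat d^{(H-h)/2} + V_{h+1}^\pi(s_{h+1}^{\hat d+1})$ for either $a$ (well-defined since $\phi(s_h^{\hat d+1},a_1)=\phi(s_h^{\hat d+1},a_2)$). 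Defining $\theta_h^\pi$ by the formula above with $W := V_{h+1}^\pi(s_{h+1}^{\hat d+1})$ filling the first $\hat d$ coordinates, the value this predicts at $s_h^{\hat d+1}$ is $\hat d^{1/2}\,W$, so consistency requires $\hat d^{1/2} W = r_0(\hat d^{1/2}-1)\hat d^{(H-h)/2} + W$, i.e.\ $W = r_0\hat d^{(H-h)/2}$ --- which is exactly the inductive invariant carried up from level $h+1$. Computing $V_h^\pi(s_h^{\hat d+1})$ from the derived $Q$-value then yields $r_0\hat d^{(H-h+1)/2}$, propagating the invariant to level $h-1$.

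The main obstacle (and really the only content beyond bookkeeping) is noticing that the reward $r_0(\hat d^{1/2}-1)\hat d^{(H-h)/2}$ at the anchor state $s_h^{\hat d+1}$ was engineered precisely so that the consistency equation $\hat d^{1/2}W = \text{reward} + W$ is satisfied by the geometrically-growing sequence $W_h = r_0\hat d^{(H-h)/2}$; without this tuning, the vector $\theta_h^\pi$ that fits the $d$ orthonormal features would fail to fit $\phi(s_h^{\hat d+1},\cdot)$, and realizability would break. Once the invariant is identified, the induction essentially writes itself.
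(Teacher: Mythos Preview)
Your proof is correct and takes essentially the same approach as the paper: both construct the identical vector $\theta_h^\pi = \sum_{c\le\hat d} Q_h^\pi(s_h^c,a_1)\,e_c + \sum_{c\le\hat d} Q_h^\pi(s_h^c,a_2)\,e_{c+\hat d}$ and verify that it reproduces the $Q$-value at the anchor state $s_h^{\hat d+1}$ via the identity $Q_h^\pi(s_h^{\hat d+1},a)=r_0\hat d^{(H-h+1)/2}$. The only difference is presentational --- the paper obtains this value by directly summing the rewards along the deterministic trajectory $s_{h}^{\hat d+1}\to s_{h+1}^{\hat d+1}\to\cdots\to s_H^{\hat d+1}$ as a telescoping sum, whereas you package the same calculation as a backward induction with the invariant $V_{h+1}^\pi(s_{h+1}^{\hat d+1})=r_0\hat d^{(H-h)/2}$; your framing has the minor expository advantage of making explicit \emph{why} the reward at $s_h^{\hat d+1}$ was tuned to $r_0(\hat d^{1/2}-1)\hat d^{(H-h)/2}$.
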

\begin{proof}
We first verify $Q^{\pi}$ is linear for the first $H - 1$ levels. 
For each $(h, c) \in  [H - 1] \times [\hat{d}]$, we have
\begin{align*}
Q^{\pi}_h(s_h^c, a_1)  = &R(s_h^c, a_1) + R(s_{h + 1}^{\hat{d} + 1}, a_1) + R(s_{h + 2}^{\hat{d} + 1}, a_1) + \ldots + R(s_H^{\hat{d} + 1}, a_1) = r_0 \cdot {\hat{d}}^{(H - h) / 2}.
\end{align*}

Moreover, for all $a \in \actions$, 
\begin{align*}
Q^{\pi}_h(s_h^{\hat{d} + 1}, a) = & R(s_h^{\hat{d} + 1}, a) + R(s_{h + 1}^{\hat{d} + 1}, a_1) + R(s_{h + 2}^{\hat{d} + 1}, a_1) + \ldots + R(s_H^{\hat{d} + 1}, a_1) =  r_0 \cdot {\hat{d}}^{(H - h + 1) / 2 }.
\end{align*}

Therefore, if we define 
\[\theta_h^{\pi} = \sum_{c = 1}^{\hat{d}} r_0 \cdot {\hat{d}}^{(H - h) / 2} \cdot e_c + \sum_{c = 1}^{\hat{d}}  Q^{\pi}_{h}(s_{h}^c, a_2) \cdot e_{c + \hat{d}},\] 
then $Q_h^{\pi}(s, a) = \left( \theta_h^{\pi}\right)^{\top}\phi(s, a)$ for all $(s, a) \in \states_h \times \actions$. 

Now we verify that the $Q$-function is linear for the last level. 
Clearly, for all $c \in [\hat{d}]$ and $a \in \actions$, $Q_H^{\pi}(s_H^c, a) = r_0$ and $Q_H^{\pi}(s_H^{\hat{d} + 1}, a) = r_0 \cdot \sqrt{\hat{d}}$.
Thus by defining $\theta_H^{\pi} = \sum_{c = 1}^d r_0 \cdot e_c$,  we have $Q_H^{\pi}(s, a) = \left(\theta_H^{\pi}\right)^{\top} \phi(s, a)$ for all $(s, a) \in \states_H \times \actions$. 

\end{proof}

\paragraph{The Data Distributions.}
For each level $h \in [H]$, the data distribution $\mu_h$ is a uniform distribution over $\{(s_h^1, a_1), (s_h^1, a_2), (s_h^2, a_1), (s_h^2, a_2), \ldots, (s_h^{\hat{d}}, a_1), (s_h^{\hat{d}}, a_2)\}$.
Notice that $(s_h^{\hat{d} + 1}, a)$ is {\em not} in the support of $\mu_h$ for all $a \in \actions$.
It can be seen that, \[\expect_{(s, a) \sim \mu_h}\left[\phi(s, a)\phi(s, a)^{\top}\right] = \frac{1}{d} \sum_{c = 1}^d e_c e_c^{\top} = \frac{1}{d}I.\]

\paragraph{The Lower Bound.}
We show that it is information-theoretically hard for any algorithm to distinguish the case $r_0 = 0$ and $r_0 = {\hat{d}}^{-H/2}$.
We fix the initial state to be $s_1^{\hat{d} + 1}$, and consider any policy $\pi : \states \to \Delta(\actions)$.
When $r_0 = 0$, all reward values will be zero, and thus the value of $\pi$ would be zero.
On the other hand, when $r_0 = {\hat{d}}^{-H / 2}$, the value of $\pi$ would be $r_0 \cdot {\hat{d}}^{H / 2 } = 1$.
Thus, if the algorithm approximates the value of the policy up to an error of $1/2$, then it must distinguish the case that $r_0 = 0$ and $r_0 = {\hat{d}}^{-H/2}$.

We first notice that for the case $r_0 = 0$ and $r_0 = {\hat{d}}^{-H/2}$, the data distributions $\{\mu_h\}_{h= 1}^H$, the feature mapping $\phi : \states \times \actions \to \mathbb{R}^d$, the policy $\pi$ to be evaluated and the transition operator $P$ are the same.
Thus, in order to distinguish the case $r_0 = 0$ and $r_0 = {\hat{d}}^{-H/2}$, the only way is to query the reward distribution by using sampling taken from the data distributions. 

For all state-action pairs $(s, a)$ in the support of the data distributions of the first $H - 1$ levels, the reward distributions will be identical.
This is because for all $s \in \states_h \setminus \{s_h^{\hat{d} + 1}\}$ and $a \in \actions$, we have $R(s, a) = 0$.
For the case $r_0 = 0$ and $r_0 = {\hat{d}}^{-H/2}$, for all state-action pairs $(s, a)$ in the support of the data distribution of the last level, 
\[
R(s, a) = 
\begin{cases}
1 & \text{with probability $(1 + r_0)/2$} \\
-1 & \text{with probability $(1 - r_0) / 2$} 
\end{cases}.
\]
Therefore, to distinguish the case that $r_0 = 0$ and $r_0 = {\hat{d}}^{-H/2}$, the agent needs to distinguish two reward distributions  
\[
r_{1} = \begin{cases}
1 & \text{with probability $1/2$}\\
-1 & \text{with probability $1/2$}
\end{cases}
\]
and 
\[
r_{2} = \begin{cases}
1 & \text{with probability $(1 + {\hat{d}}^{-H/2}) / 2$}\\
-1 & \text{with probability $(1 - {\hat{d}}^{-H/2}) / 2$}
\end{cases}.
\]
It is well known that in order to distinguish $r_1$ and $r_2$ with probability at least $0.9$, any algorithm requires $\Omega({\hat{d}}^H)$ samples.  
See e.g. Lemma 5.1 in~\citep{anthony2009neural}. See also~\citep{chernoff1972sequential, mannor2004sample}.

\begin{remark}
The key in our construction is the state $s_h^{\hat{d} + 1}$ in each level, whose feature vector is defined to be $\sum_{c \in \hat{d}}e_c / \hat{d}^{1/2}$.
In each level, $s_h^{\hat{d} + 1}$ amplifies the $Q$-values by a $\hat{d}^{1/2}$ factor, due to the linearity of the $Q$-function.
After all the $H$ levels, the value will be amplified by a $\hat{d}^{H / 2}$ factor.
Since $s_h^{\hat{d} + 1}$ is not in the support of the data distribution, the only way for the agent to estimate the value of the policy is to estimate the expected reward value in the last level.
Our construction forces the estimation error of the last level to be amplified exponentially and thus implies an exponential lower bound.

We would like to remark that the design of the feature mapping in our construction could be flexible.
It suffices if $e_1, e_2, \ldots, e_d$ are only nearly orthogonal.
Moreover, the feature of $s_h^{\hat{d} + 1}$ can be changed to $\sum_{c=1}^{\hat{d}} w_c e_c$ for a general set of coefficients $w_1, w_2, \ldots, w_{\hat{d}}$ so long as $\sum_{c = 1}^{\hat{d}}w_c$ is sufficiently large.
\end{remark}

\section{Upper Bounds: \\
Low Distribution Shift or Policy Completeness are Sufficient}\label{sec:upper}
In order to illustrate the error amplification issue and discuss conditions that permit sample-efficient offline RL, 
in this section, we analyze Least-Squares Policy Evaluation when applied to the offline policy evaluation problem under the realizability assumption.
The algorithm is presented in Algorithm~\ref{algo:upper}. 
For simplicity here we assume the policy $\pi$ to be evaluated is deterministic. 
\begin{algorithm}[!t]
	\caption{Least-Squares Policy Evaluation}
	\label{algo:main}
	\begin{algorithmic}[1]
	\State \textbf{Input:} policy $\pi$ to be evaluated, number of samples $N$, regularization parameter $\lambda > 0$
	\State Let $Q_{H + 1}(\cdot, \cdot) = 0$ and $V_{H + 1}(\cdot) = 0$
		\For{$h = H, H - 1, \ldots, 1$}
		\State Take samples $(s_h^i, a_h^i) \sim \mu_h$, $r_h^i \sim r(s_h^i, a_h^i)$ and $\overline{s}_h^i \sim \trans(s_h^i, a_h^i)$ for each $i \in [N]$
		\State Let $\hat{\Lambda}_h = \sum_{i \in [N]} \phi(s_h^i, a_h^i)\phi(s_h^i, a_h^i)^{\top} + \lambda I$
		\State Let $\hat{\theta}^h =\hat{\Lambda}_h^{-1} \left(\sum_{i = 1}^{N} \phi(s_h^i, a_h^i) \cdot(r_h^i + \hat{V}_{h + 1}(\overline{s}_h^i))\right) $
		\State Let $\hat{Q}_h(\cdot, \cdot) = \phi(\cdot, \cdot)^{\top} \hat{\theta}_h$ and $\hat{V}_h(\cdot) = \hat{Q}(\cdot, \pi(\cdot))$
		\EndFor
	\end{algorithmic}
	\label{algo:upper}
\end{algorithm}
\paragraph{Notation.}Before presenting our analysis, we first define necessary notations.
For each $h \in [H]$, 
define \[\Lambda_h = \expect_{(s, a) \sim \mu_h}\left[\phi(s, a)\phi(s, a)^{\top}\right]\] to be the feature covariance matrix of the data distribution at level $h$.
Moreover, for each $h \in [H - 1]$, define
\[\overline{\Lambda}_{h + 1} = \expect_{(s, a) \sim \mu_h, \overline{s} \sim \trans(\cdot \mid s, a)}\left[\phi(\overline{s}, \pi(\overline{s}))\phi(\overline{s}, \pi(\overline{s}))^{\top}\right].\]
to be the feature covariance matrix of the one-step lookahead distribution induced by the data distribution at level $h$ and $\pi$.
Moreover,  define
$\overline{\Lambda}_1 = \phi(s_1, \pi(s_1))\phi(s_1, \pi(s_1))^{\top}$.
We define $\Phi_h$ to be a $N \times d$ matrix, whose $i$-th row is $\phi(s_h^i, a_h^i)$, and define $\overline{\Phi}_{h + 1}$ to be another $N \times d$ matrix whose $i$-th row is $\phi(\overline{s}_h^i, \pi(\overline{s}_h^i))$.
For each $h \in [H]$ and $i \in [N]$, define $\xi_h^i = r_h^i + V(\overline{s}_h^i) - Q(s_h^i, a_h^i)$.
Clearly, $\expect[\xi_h^i] = 0$ and $|\xi_h^i| \le 2H$.
We also use $\xi_h$ to denote a vector whose $i$-th entry is $\xi_h^i$.

Now we present a general lemma that characterizes the estimation error of Algorithm~\ref{algo:upper} by an {\em equality}.
The proof can be found in Appendix~\ref{sec:upper_proof}.
In later parts of this section, we apply this general lemma to special cases.
\begin{lemma}\label{lem:upper_main}
Suppose $\lambda > 0$ in Algorithm~\ref{algo:upper}, and for the given policy $\pi$, there exists $\theta_1, \theta_2, \ldots, \theta_d \in \mathbb{R}^d$
such that for each $h \in [H]$, $Q^{\pi}_h(s, a) = \phi(s, a)^{\top}\theta_h$ for all $(s, a) \in \states_h \times \actions$.
Then we have
\begin{equation}\label{eqn:error}
(Q^{\pi}(s_1, \pi(s_1)) - \hat{Q}(s_1, \pi(s_1)))^2 =\left\| \sum_{h = 1}^H \hat{\Lambda}_1^{-1}\Phi_1^{\top}\overline{\Phi}_{2} \hat{\Lambda}_{2}^{-1}\Phi_{2}^{\top} \overline{\Phi}_{3} \cdots  (\hat{\Lambda}_{h}^{-1}\Phi_{h}^{\top} \xi_{h} - \lambda \hat{\Lambda}_{h}^{-1}\theta_{h})\right \|_{\overline{\Lambda}_1}^2.
\end{equation}
\end{lemma}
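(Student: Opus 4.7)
The plan is to derive a one-step recursion for the parameter error $\Delta_h := \hat{\theta}_h - \theta_h$, unroll it from $h=1$ to the terminal boundary, and then translate $\Delta_1$ into the claimed $\overline{\Lambda}_1$-weighted norm. Throughout I will identify $V^\pi_{h+1}$ with its linear representation via realizability: $V^\pi_{h+1}(\overline{s}_h^i) = \phi(\overline{s}_h^i, \pi(\overline{s}_h^i))^\top \theta_{h+1}$, and likewise $\hat{V}_{h+1}(\overline{s}_h^i) = \phi(\overline{s}_h^i, \pi(\overline{s}_h^i))^\top \hat{\theta}_{h+1}$ from Algorithm~\ref{algo:upper}.

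First I would rewrite the regression target in Line~6 by adding and subtracting $Q^\pi_h(s_h^i, a_h^i) = \phi(s_h^i, a_h^i)^\top \theta_h$ and $V^\pi_{h+1}(\overline{s}_h^i)$. Using the definition $\xi_h^i = r_h^i + V^\pi_{h+1}(\overline{s}_h^i) - Q^\pi_h(s_h^i, a_h^i)$ and stacking into matrix form, this yields
\[
\hat{\theta}_h = \hat{\Lambda}_h^{-1} \Phi_h^\top \!\left( \Phi_h \theta_h + \xi_h + \overline{\Phi}_{h+1}(\hat{\theta}_{h+1} - \theta_{h+1}) \right).
\]
Next I would invoke the regularized normal-equation identity $\hat{\Lambda}_h^{-1} \Phi_h^\top \Phi_h = I - \lambda \hat{\Lambda}_h^{-1}$, which follows directly from $\hat{\Lambda}_h = \Phi_h^\top \Phi_h + \lambda I$. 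Subtracting $\theta_h$ from both sides then gives the clean recursion
\[
\Delta_h \;=\; \underbrace{\hat{\Lambda}_h^{-1}\Phi_h^\top \xi_h - \lambda \hat{\Lambda}_h^{-1}\theta_h}_{=:\,\epsilon_h} \;+\; \hat{\Lambda}_h^{-1}\Phi_h^\top \overline{\Phi}_{h+1}\, \Delta_{h+1},
\]
with boundary condition $\Delta_{H+1}=0$ (since $\hat{V}_{H+1}\equiv 0 \equiv V^\pi_{H+1}$).

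I would then unroll this recursion from $h=1$ downward to obtain
\[
\Delta_1 \;=\; \sum_{h=1}^{H} \left( \prod_{j=1}^{h-1} \hat{\Lambda}_j^{-1}\Phi_j^\top \overline{\Phi}_{j+1} \right) \epsilon_h,
\]
which is exactly the vector appearing inside the norm in \eqref{eqn:error}. Finally, by realizability and the algorithm's output, $Q^\pi(s_1, \pi(s_1)) - \hat{Q}(s_1, \pi(s_1)) = -\phi(s_1,\pi(s_1))^\top \Delta_1$, so squaring and using the definition $\overline{\Lambda}_1 = \phi(s_1,\pi(s_1))\phi(s_1,\pi(s_1))^\top$ gives $(Q^\pi - \hat{Q})^2 = \Delta_1^\top \overline{\Lambda}_1 \Delta_1 = \|\Delta_1\|_{\overline{\Lambda}_1}^2$, completing the identity.

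The proof involves no deep ingredients; the only thing that needs care is the bookkeeping of which design matrix is $\Phi_h$ (current state-action features at level $h$) versus $\overline{\Phi}_{h+1}$ (one-step-lookahead features evaluated under $\pi$), since it is the alternating product of these two types of factors that controls whether the error compounds benignly or explodes. Making this alternation explicit in the unrolling is what produces the chain $\hat{\Lambda}_1^{-1}\Phi_1^\top \overline{\Phi}_2 \hat{\Lambda}_2^{-1}\Phi_2^\top\overline{\Phi}_3 \cdots$ that subsequent bounds in the section will analyze to expose the distribution-shift amplification.
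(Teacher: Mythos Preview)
Your proposal is correct and follows essentially the same approach as the paper: both derive the one-step recursion for $\hat{\theta}_h-\theta_h$ via the ridge identity $\hat{\Lambda}_h^{-1}\Phi_h^\top\Phi_h = I-\lambda\hat{\Lambda}_h^{-1}$, unroll it to express $\hat{\theta}_1-\theta_1$ as the telescoping sum of propagated local errors $\epsilon_h$, and then observe that the squared value error equals the $\overline{\Lambda}_1$-norm of this difference. Your write-up is in fact slightly cleaner in its bookkeeping (explicitly naming $\Delta_h$ and $\epsilon_h$) than the paper's, which carries a harmless sign typo in the displayed recursion that your version avoids.
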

In the remaining part of this section, we consider two special cases where the estimation error in Equation~\ref{eqn:error} can be upper bounded. 

\paragraph{Low Distribution Shift.}
The first special we focus on is the case where the distribution shift between the data distributions and the distribution induced by the policy to be evaluated is low.
To measure the distribution shift formally, our main assumption is as follows.
\begin{assumption}\label{assmp:cov}
We assume that for each $h \in [H]$, there exists $C_h \ge 1$ such that $\overline{\Lambda_h} \preceq C_h\Lambda_h$.
\end{assumption}
\begin{remark}
For each $h \in [H]$, if $\sigma_{\min}(\Lambda_h) \succeq \frac{1}{C_h}I$ for some $C_h \ge 1$, 
then we have $\overline{\Lambda}_h \preceq I \preceq C_h\Lambda_h$.
Therefore, Assumption~\ref{assmp:cov} can be replaced with the assumption that ${C_h}\Lambda_h \succeq I$.
However, we stick to the original version of Assumption~\ref{assmp:cov}, since it gives a tighter characterization of the distribution shift when applying Algorithm~\ref{algo:upper} to off-policy evaluation under the realizability assumption. 
\end{remark}
Now we state the theoretical guarantee of Algorithm~\ref{algo:upper}. 
The proof can be found in Appendix~\ref{sec:upper_proof}.
\begin{theorem}\label{thm:upper}
Suppose for the given policy $\pi$, there exists $\theta_1, \theta_2, \ldots, \theta_d \in \mathbb{R}^d$
such that for each $h \in [H]$, $Q^{\pi}_h(s, a) = \phi(s, a)^{\top}\theta_h$ for all $(s, a) \in \states_h \times \actions$ and $\|\theta_h\|_2 \le H\sqrt{d}$.\footnote{Due to our assumption that the rewards are bounded in $[-1,1]$, without loss of
generality, we can work in a coordinate system is such that
$\|\theta_h\|_2 \le H\sqrt{d}$ and $\|\phi(s, a)\|_2\leq 1$ for all 
$(s,a) \in \states \times \actions$.
This follows due to John's theorem (e.g. see
\citep{ball1997elementary,john_bandit}).}
Let $\lambda =  CH \sqrt{d \log (dH / \delta) N}$ for some $C > 0$.
With probability at least $1 - \delta$, for some $c > 0$,
\[
(Q^{\pi}_1(s_1, \pi(s_1)) - \hat{Q}_1(s_1, \pi(s_1)))^2 \le 
c \cdot \left(\prod_{h = 1}^H C_h\right) \cdot dH^5 \cdot \sqrt{\frac{d \log (d H/ \delta)}{N}} .\footnote{We remark that better dependency on $N$ is possible by relying on conditions on higher moments of the data distributions. We omit such discussion for simplicity. See e.g.~\citep{hsu2012random}.}
\]
\end{theorem}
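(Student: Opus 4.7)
My plan is to derive the bound from the exact equality in Lemma~\ref{lem:upper_main} by first Cauchy--Schwarz-ing the sum inside the square, then controlling a single term $T_h = \hat\Lambda_1^{-1}\Phi_1^\top\overline\Phi_2\cdot \hat\Lambda_2^{-1}\Phi_2^\top\overline\Phi_3 \cdots \hat\Lambda_{h-1}^{-1}\Phi_{h-1}^\top\overline\Phi_h \cdot e_h$, where $e_h = \hat\Lambda_h^{-1}\Phi_h^\top\xi_h - \lambda\hat\Lambda_h^{-1}\theta_h$. This gives $(Q^\pi_1-\hat Q_1)^2 \le H\sum_{h=1}^H\|T_h\|_{\overline\Lambda_1}^2$. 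The workhorse is a clean empirical ``projection'' inequality: for any $v$,
\[
\|\hat\Lambda_k^{-1}\Phi_k^\top\overline\Phi_{k+1}v\|_{\hat\Lambda_k}^2 \;=\; v^\top\overline\Phi_{k+1}^\top\Phi_k\hat\Lambda_k^{-1}\Phi_k^\top\overline\Phi_{k+1}v \;\le\; v^\top\overline\Phi_{k+1}^\top\overline\Phi_{k+1}v,
\]
which follows because $\Phi_k\hat\Lambda_k^{-1}\Phi_k^\top\preceq I$ (as $\hat\Lambda_k\succeq\Phi_k^\top\Phi_k$). One ``hop'' therefore converts a $\hat\Lambda_k$-norm on one factor into a Gram-matrix norm $\overline\Phi_{k+1}^\top\overline\Phi_{k+1}$ on the next.

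To telescope $h-1$ hops, I would apply matrix Bernstein (union bound over $h\in[H]$) to secure a high-probability event on which $\hat\Lambda_h/N$ is within $O(\sqrt{\log(dH/\delta)/N})$ of $\Lambda_h + (\lambda/N)I$ in operator norm, and similarly $\overline\Phi_{h+1}^\top\overline\Phi_{h+1}/N$ is close to $\overline\Lambda_{h+1}$. Combining with Assumption~\ref{assmp:cov} yields $\overline\Phi_{h+1}^\top\overline\Phi_{h+1}\preceq(C_{h+1}+o(1))\,\hat\Lambda_{h+1}$, so iterating the hop inequality gives
\[
\|T_h\|_{\hat\Lambda_1}^2 \;\lesssim\; \Bigl(\prod_{j=2}^{h} C_j\Bigr)\,\|e_h\|_{\hat\Lambda_h}^2,
\]
and a single use of $\overline\Lambda_1\preceq C_1\Lambda_1\lesssim (C_1/N)\hat\Lambda_1$ converts $\|T_h\|_{\overline\Lambda_1}^2$ to $\tfrac{1}{N}\prod_{j=1}^h C_j\cdot\|e_h\|_{\hat\Lambda_h}^2$. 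The terminal piece $\|e_h\|_{\hat\Lambda_h}^2 = \|\hat\Lambda_h^{-1/2}(\Phi_h^\top\xi_h - \lambda\theta_h)\|_2^2$ splits into a noise term, bounded by $O(H^2 d\log(dH/\delta))$ via self-normalized concentration (using that $\xi_h^i$ is mean zero conditional on $(s_h^i,a_h^i)$ and bounded by $2H$), and a bias term $\lambda^2\|\theta_h\|_{\hat\Lambda_h^{-1}}^2 \le \lambda\|\theta_h\|_2^2 \le \lambda H^2 d$. With $\lambda = CH\sqrt{dN\log(dH/\delta)}$ these are balanced and, once recombined through the chain and summed over $h\in[H]$ with the extra $H$ from Cauchy--Schwarz, produce the advertised $O\!\bigl(\prod_h C_h\cdot dH^5\sqrt{d\log(dH/\delta)/N}\bigr)$ bound.

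The main obstacle I foresee is preventing the per-hop spectral slack $O(\sqrt{\log(dH/\delta)/N})$ from compounding multiplicatively across the $H$ hops, which would inflate the bound by a $(1+\epsilon)^H$ factor and swamp the polynomial prefactors. My strategy is to work in the regime where the per-hop slack is at most $1/H$ (automatically implied by the stated sample complexity, else the bound is vacuous), and to absorb these slacks additively into the $C_j$ constants so the product $\prod_j C_j$ is accumulated only once. A secondary nuisance is that the hop inequality lives in $\hat\Lambda$-norms while Assumption~\ref{assmp:cov} is stated in $\Lambda$-norms; bridging this requires a two-sided matrix concentration of $\hat\Lambda_h/N$ around $\Lambda_h + (\lambda/N)I$, which is standard but needs careful bookkeeping to ensure the $\lambda/N$ regularization term is absorbed without spoiling the chain inequality.
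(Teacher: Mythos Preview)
Your proposal is correct and follows essentially the same route as the paper's own proof: the paper also starts from the equality in Lemma~\ref{lem:upper_main}, applies Cauchy--Schwarz over $h$, peels the chain using $\Phi_k\hat\Lambda_k^{-1}\Phi_k^\top\preceq I$ together with matrix concentration so that $\hat\Lambda_h\succeq N\Lambda_h$ and $\|\hat\Lambda_{h}^{-1/2}\overline\Phi_h^\top\overline\Phi_h\hat\Lambda_h^{-1/2}\|_2\le C_h+1/H$, bounds the noise term $\|\xi_h\|_{\Phi_h\hat\Lambda_h^{-1}\Phi_h^\top}^2$ by a Hanson--Wright/self-normalized argument, and handles the bias via $\lambda^2\|\theta_h\|_{\hat\Lambda_h^{-1}}^2\le \lambda H^2 d$. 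Your anticipated $(1+1/H)^H\le e$ absorption of the per-hop slack is exactly what the paper does as well.
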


\begin{remark}
The factor $\prod_{h = 1}^H C_h$ in Theorem~\ref{thm:upper} implies that the estimation error will be amplified {\em geometrically} as the algorithm proceeds. Now we briefly discuss how the error is amplified when running Algorithm~\ref{algo:upper} on the instance in Section~\ref{sec:hard_det} to better illustrate the issue. 
If we run Algorithm~\ref{algo:upper} on the hard instance in Section~\ref{sec:hard_det}, when $h = H$, the estimation error on $V(s_H^c)$ would be roughly $N^{-1/2}$ for each $c \in [\hat{d}]$. When using the linear predictor at level $H$ to predict the value of $s_H^*$, the error will be amplified by $\hat{d}^{1/2}$.
When $h = H - 1$, the dataset contains only $s_{H - 1}^c$ for $c \in [\hat{d}]$, and the estimation error on the value of  $s_{H - 1}^c$ will be the same as that of $s_H^*$, which is roughly $(\hat{d} / N)^{1/2}$. 
Again, the estimation error on the value of $s_{H - 1}^*$ will be $(\hat{d}^2 / N)^{1/2}$ when using the linear predictor at level $H - 1$. As the algorithm proceeds, the error will eventually be amplified by a factor of $\hat{d}^{H / 2}$, which corresponds to the factor $\prod_{h = 1}^H C_h$ in Theorem~\ref{thm:upper}.
\end{remark}

\paragraph{Policy Completeness.}
In the offline RL literature, another common representation condition is closedness under Bellman update~\citep{szepesvari2005finite, chen2019information}, which is stronger than realizability. 
In the context of offline policy evaluation, we have the following policy completeness assumption.
\begin{assumption}\label{assmp:policy_completeness}
For the given policy $\pi$, for any $h  > 1$ and $\theta_h \in \mathbb{R}^d$ with $\sup_{(s, a) \in \states_h \times \actions}|\phi(s, a)^{\top}\theta_h|  \le H$, there exists $\theta' \in \mathbb{R}^d$ with $\|\theta'\|_2 \le H\sqrt{d}$, such that for any $(s, a) \in \states_{h - 1} \times \actions$,
\[
\mathbb{E}[R(s, a)] + \sum_{s' \in \states_{h}}P(s' \mid s, a) \phi(s', \pi(s'))^{\top} \theta_h = \phi(s, a)^{\top} \theta'.
\]
\end{assumption}
Under Assumption~\ref{assmp:policy_completeness} and the additional assumption that the feature covariance matrix of the data distributions have lower bounded eigenvalue, i.e., $\sigma_{\mathrm{min}}(\Lambda_h) \ge \lambda_0$ for all $h \in [H]$ for some $\lambda_0 > 0$, prior work~\citep{chen2019information} has shown that for Algorithm~\ref{algo:upper}, by taking $N = \mathrm{poly}(H, d, 1 / \varepsilon, 1 / \lambda_0)$ samples, we have
$(Q^{\pi}_1(s_1, \pi(s_1)) - \hat{Q}_1(s_1, \pi(s_1)))^2 \le \varepsilon$.
We omit such an analysis and refer interested readers to~\citep{chen2019information}.

Before ending this section,  we would like to note that the above analysis again implies that geometric error amplification is a real issue in offline RL, and sample-efficient offline RL is impossible unless the distribution shift is sufficiently low, i.e., $\prod_{h = 1}^H C_h$ is bounded, or stronger representation condition such as policy completeness is assumed as in prior works~\citep{szepesvari2005finite, chen2019information}.
\section{Conclusion}

While the extant body of provable results in the literature
largely focus on \emph{sufficient} conditions for sample-efficient
offline RL, this work focuses on obtaining a better understanding of
the \emph{necessary} conditions, where we seek to understand to what
extent mild assumptions can imply sample-efficient offline RL.  This work
shows that for off-policy evaluation, even if we are given a
representation that can perfectly represent the value function of the
given policy and the data distribution has good coverage over the
features, any provable algorithm still requires an exponential number of samples to non-trivially
approximate the value of the given policy.  
These results highlight that provable
sample-efficient offline RL is simply not possible unless either the
distribution shift condition is sufficiently mild or we have
stronger representation conditions that go well beyond
realizability.

\section*{Acknowledgments}
The authors would like to thank Akshay Krishnamurthy, Alekh Agarwal, Wen Sun, and Nan Jiang
for numerous helpful discussion on offline RL. Sham Kakade gratefully
acknowledges funding from the ONR award N00014-18-1-2247, and NSF
Awards CCF-1703574 and CCF-1740551. 
\bibliography{bib}
\bibliographystyle{abbrvnat}
\newpage
\appendix

\section{Another Hard Instance}\label{sec:hard_sparse}

\begin{figure}[!tb]
\centering
\includegraphics[width=1.01\linewidth]{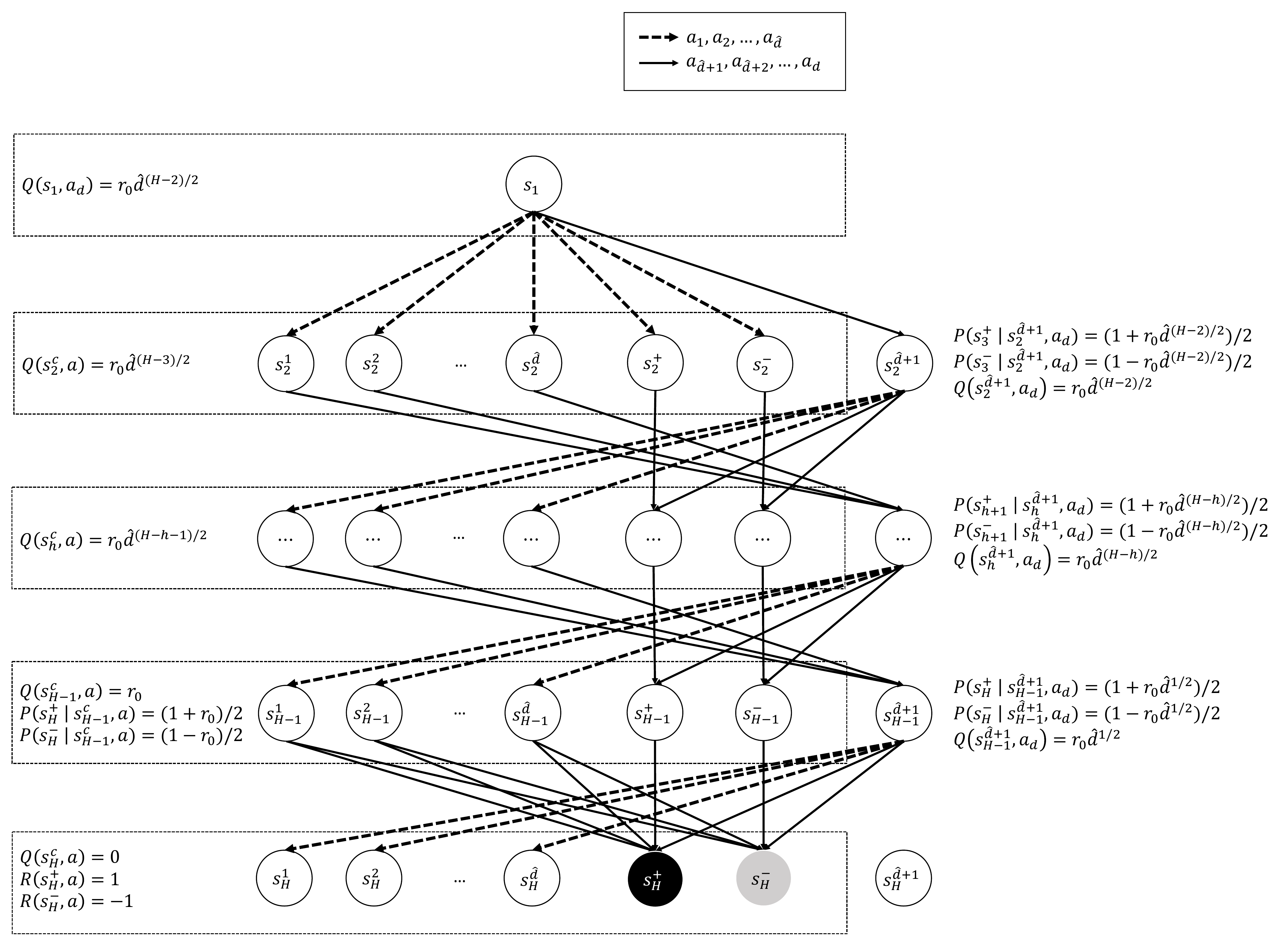}
\caption{An illustration of the hard instance.
Recall that $\hat{d} = d / 2 - 1$.
States on the top are those in the first level ($h = 1$), while states at the bottom are those in the last level $(h = H)$. 
Dotted line (with arrow) corresponds to transitions associated with actions $a_1, a_2, \ldots, a_{\hat{d}}$, while solid line (with arrow) corresponds to transitions associated with actions $a_{\hat{d} + 1}, a_{\hat{d} + 2}, \ldots, a_d$.
We omit the transition associated with $a_1, a_2, \ldots, a_{\hat{d}}$ in the figure if all actions give the same transition. 
For each level $h \in [H]$, $Q$-values associated with $s_h^1, s_h^2, \ldots, s_h^{\hat{d}}, s_h^+, s_h^-$ are marked on the left, while transition distributions and $Q$-values associated with $s_h^{\hat{d} + 1}$ are marked on the right. 
Rewards are all deterministic, and the only two states ($s_H^+$ and $s_H^-$) with non-zero reward values are marked in black and grey.
Consider the fixed policy that returns $a_d$ for all input states. 
When $r_0 = 0$, the value of the policy is $0$.
When $r_0 = {\hat{d}}^{-(H-2)/2}$, the value of the policy is $= r_0 {\hat{d}}^{(H - 2)/2} = 1$.
}
\label{fig:hard2}
\end{figure}

In this section, we present another hard case under a weaker version of Assumption~\ref{assmp:realizability}.
Here the transition operator is stochastic and the reward is deterministic and sparse, meaning that the reward value is non-zero only for the last level. 
Moreover, the data distributions $\{\mu_h\}_{h = 1}^H$ are induced by a fixed policy $\pi_{\mathrm{data}}$.
We also illustrate the construction in Figure~\ref{fig:hard2}.
Throughout this section, we use $d$ the denote the feature dimension, and we assume $d$ is an even integer for simplicity.
We use $\hat{d}$ to denote $d / 2 - 1$.

In this section, we adopt the following realizability assumption, which is a weaker version of Assumption~\ref{assmp:realizability}.
\begin{assumption}[Realizability] \label{assmp:realizability_fixed}
For the policy $\pi :
\states \to \Delta(\actions)$ to be evaluated, there exists $\theta_1, \ldots
\theta_H \in \mathbb{R}^d$ such that for all $(s, a) \in \states \times \actions$ and $h \in [H]$,
\[
Q^{\pi}_h(s, a) = \theta_h^{\top}\phi(s, a).
\]
\end{assumption}

\paragraph{State Space, Action Space and Transition Operator.}
In this hard case, the action space $\actions = \{a_1, a_2, \ldots, a_d\}$ contains $d$ elements.
$\states_1$ contains a single state $s_1$.
For each $h \ge 2$, $\states_h$ contains $\hat{d} + 3$ states $s_h^1, s_h^2, \ldots, s_h^{\hat{d}}, s_h^{\hat{d} + 1}, s_h^+$ and $s_h^-$. 

Let $0 \le r_0 \le {\hat{d}^{-(H - 2)/2}}$ be a parameter to be determined. 
We first define the transition operator for the first level. We have
\[
P(s_1, a) = \begin{cases}
s_2^{c} & a = a_c, c \in [\hat{d}]\\
s_2^+ & a = a_{\hat{d} + 1}\\
s_2^- & a = a_{\hat{d} + 2}\\
s_2^{\hat{d} + 1} & a \in \left\{ a_{\hat{d} + 3},  a_{\hat{d} + 4}, \ldots, a_d\right\}
\end{cases}.
\]
Now we define the transition operator when $h \in \{2, 3, \ldots, H - 2\}$.
For each $h \in \{2, 3, \ldots, H - 2\}$, $a \in \actions$ and $c \in [\hat{d}]$, we have $\trans(s_h^c, a) = s_{h + 1}^{\hat{d} + 1}$, $\trans(s_h^+, a) = s_{h + 1}^+$ and $\trans(s_h^-, a) = s_{h + 1}^-$.
For each $h \in \{2, 3, \ldots, H - 2\}$ and $c \in [\hat{d}]$, we have $\trans(s_h^{\hat{d} + 1}, a_c)=s_{h + 1}^c$.
For all $a \in \left\{a_{\hat{d} + 1}, a_{\hat{d} + 2}, \ldots, a_d\right\}$, we have
\[
\trans\left(s_h^{\hat{d} + 1}, a\right) = \begin{cases}
s_{h + 1}^+ & \text{with probability $(1+r_0 \cdot \hat{d}^{(H - h) / 2})/2$}\\
s_{h + 1}^- & \text{with probability $(1 - r_0 \cdot \hat{d}^{(H - h) / 2}) / 2$}
\end{cases}.
\]

Now we define the transition operator for the second last level.
For all $c \in [\hat{d}]$ and $a \in \actions$, we have 
\[
\trans(s_{H - 1}^c, a) = \begin{cases}
s_H^+ & \text{with probability $(1+r_0)/2$}\\ 
s_H^-  & \text{with probability $(1 - r_0)/2$}
\end{cases}.
\]
For all $a \in \actions$, we have $\trans(s_{H - 1}^+, a) = s_{H}^+$ and $\trans(s_{H  - 1}^-, a) = s_{H}^-$.
For each $c \in [\hat{d}]$, we have $\trans(s_{H - 1}^{\hat{d} + 1}, a_c) = s_H^c$.
For all $a \in \left\{a_{\hat{d} + 1}, a_{\hat{d} + 2}, \ldots, a_d\right\}$, we have
\[
\trans(s_{H - 1}^{\hat{d} + 1}, a) = \begin{cases}
s_{H}^+ & \text{with probability $\left(1+r_0 \cdot \sqrt{\hat{d}}\right)/2$}\\
s_{H}^- & \text{with probability $\left(1 - r_0 \cdot \sqrt{\hat{d}}\right)/2$}
\end{cases}.
\]

\paragraph{Reward Values. }
In this hard case, all reward values are deterministic, and reward values can be non-zero only for the last level.
Formally, we have  
\[
R(s, a) = \begin{cases}
1 & s = s_H^+\\
-1 & s = s_H^-\\
0 & \text{otherwise}
\end{cases}.
\]

\paragraph{Feature Mapping.}
As in the in hard instance in Section~\ref{sec:hard_det}, let $e_1, e_2, \ldots, e_{d}$ be a set of of orthonormal vectors in $\mathbb{R}^d$.
For the initial state, for each $c \in [d]$, we have $\phi(s_1, a_c) = e_c$.

Now we define the feature mapping when $h \in \{2, 3, \ldots, H\}$.
For each $h \in \{2, 3, \ldots, H\}$, $a \in \actions$ and $c \in [\hat{d}]$, 
$\phi(s_h^c, a) = e_c$, $\phi(s_h^+, a) = e_{\hat{d} + 1}$ and  $\phi(s_h^-, a) = e_{\hat{d} + 2}$.
Moreover, for all actions $a \in \actions$, 
\[
\phi(s_h^{\hat{d} + 1}, a) = \begin{cases}
e_{\hat{d} + 2 + c} & a = a_c, c \in [\hat{d}]\\
\frac{1}{\hat{d}^{1/2}} \left(e_1 + e_2 + \ldots + e_{\hat{d}}\right) &  a \in \left\{a_{\hat{d} + 1}, a_{\hat{d} + 2}, \ldots, a_d\right\}
\end{cases}.
\]
Clearly, for all $(s, a) \in \states \times \actions$, $\|\phi(s, a)\|_2 \le 1$.
 
\paragraph{Verifying Assumption~\ref{assmp:realizability_fixed}.}
Now we consider the deterministic policy $\pi : \states \to \actions$, which is defined to be
$
\pi(s) = a_d
$
for all $s \in \states$.
We show that Assumption~\ref{assmp:realizability_fixed} holds.

When $h = 1$, define
\[
\theta_1 = \sum_{c = 1}^{\hat{d}} r_0 \cdot {\hat{d}}^{(H - 3) / 2} \cdot e_c + e_{\hat{d} + 1} - e_{\hat{d} + 2}  + \sum_{c = 1}^{\hat{d}} r_0 \cdot {\hat{d}}^{(H - 2) / 2}  \cdot e_{\hat{d} + 2 + c}.
\]

For each $h \in \{2, 3, \ldots, H - 2\}$, define
\[
\theta_h = \sum_{c = 1}^{\hat{d}} r_0 \cdot {\hat{d}}^{(H - h - 1) / 2} \cdot e_c + e_{\hat{d} + 1} - e_{\hat{d} + 2} + \sum_{c = 1}^{\hat{d}}r_0 \cdot {\hat{d}}^{(H - h - 2) / 2} \cdot e_{\hat{d} + 2 + c}.
\] 

For the second last level $h = H - 1$, define
\[
\theta_{H - 1} = \sum_{c = 1}^{\hat{d}} r_0 \cdot e_c + e_{\hat{d} + 1} - e_{\hat{d} + 2}.
\]

Finally, for the last level $h = H$, define
\[
\theta_{H} =e_{\hat{d} + 1} - e_{\hat{d} + 2}.
\]
It can be verified that for each $h \in [H]$, $Q_h^{\pi}(s, a) = \theta_h^{\top}\phi(s, a)$ for all $(s, a) \in \states_h \times \actions$.

\paragraph{The Data Distributions.}
For the first level, the data distribution $\mu_1$ is defined to be the uniform distribution over $\{(s_1, a_c) \mid c \in [d]\}$.
For each $h \ge 2$, the data distribution $\mu_h$ is a uniform distribution over
\[
\{(s_h^1, a_1), (s_h^2, a_1), \ldots, (s_h^{\hat{d}}, a_1), (s_h^+, a_1), (s_h^-, a_1), (s_h^{\hat{d} + 1}, a_1), (s_h^{\hat{d} + 1}, a_2), \ldots, (s_h^{\hat{d} + 1}, a_{\hat{d}})\}.
\]
Notice that again $(s_h^{\hat{d} + 1}, a)$ is {\em not} in the support of $\mu_h$ for all actions $a \in \left\{a_{\hat{d} + 1}, a_{\hat{d} + 2}, \ldots, a_d\right\}$.
It can be seen that for all $h \in [H]$, 
\[
\expect_{(s, a) \sim \mu_h}[\phi(s, a)\phi(s, a)^{\top}] = \frac{1}{d} \sum_{c = 1}^d e_c e_c^{\top} = \frac{1}{d}I.
\]
Moreover, by defining \[
\pi_{\mathrm{data}}(s) = \begin{cases}
\uniform(\actions) & s = s_1\\
a_1& s \in \{s_h^c \mid h \in \{2, 3, \ldots, H\}, c \in [\hat{d}]\} \\
a_1 & s \in \{s_h^+ \mid h \in \{2, 3, \ldots, H\}\}\\
a_1 & s \in \{s_h^- \mid h \in \{2, 3, \ldots, H\}\}\\
\uniform(\{a_1, a_2, \ldots, a_{\hat{d}}\}) & s \in \{s_h^{\hat{d} + 1}\mid h \in \{2, 3, \ldots, H\}\}
\end{cases},
\]
we have $\mu_h = \mu_h^{\pi_{\mathrm{data}}}$ for all $h \in [H]$.

\paragraph{The Lower Bound.}
Now we show that it is information-theoretically hard for any algorithm to distinguish the case $r_0 = 0$ and $r_0 = {\hat{d}}^{-(H - 2)/2}$ in the offline setting by taking samples from the data distributions $\{\mu_h\}_{h = 1}^H$. 
Here we consider the above policy $\pi$ defined above which returns action $a_d$ for all input states.
Notice that when $r_0 = 0$, the value of the policy would be zero.
On the other hand, when $r_0 = {\hat{d}}^{-(H - 2) / 2}$, the value of the policy would be
$r_0 \cdot {\hat{d}}^{(H - 2) / 2 } = 1$.
Therefore, if the algorithm approximates the value of the policy up to an approximation error of $1/2$, then it must distinguish the case that $r_0 = 0$ and $r_0 = {\hat{d}}^{-( H - 2)/2}$.

We first notice that for the case $r_0 = 0$ and $r_0 =  {\hat{d}}^{-( H - 2)/2}$, the data distributions $\{\mu_h\}_{h= 1}^H$, the feature mapping $\phi : \states \times \actions \to \mathbb{R}^d$, the policy $\pi$ to be evaluated and the reward distributions $R$ are the same.
Thus, in order to distinguish the case $r_0 = 0$ and $r_0 =  {\hat{d}}^{-( H - 2)/2}$, the only way is to query the transition operator $\trans$ by using sampling taken from the data distributions. 

Now, for all state-action pairs $(s, a)$ in the support of the data distributions of the first $H - 2$ levels (namely $\mu_1, \mu_2, \ldots, \mu_{H - 2}$), the transition operator will be identical.
This is because changing $r_0$ only changes the transition distributions of $(s_h^{\hat{d} + 1}, a_{\hat{d} + 1}), (s_h^{\hat{d} + 1}, a_{\hat{d} + 2}), \ldots, (s_h^{\hat{d} + 1}, a_d)$, and such state-actions are not in the support of $\mu_h$ for all $h \in [H - 2]$.
Moreover, for any 
$
(s, a) \in \{s_{H - 1}^+, s_{H - 1}^-,  s_{H - 1}^{\hat{d} + 1}\} \times \actions
$
in the support of $\mu_{H - 1}$, 
$P(s, a)$ will also be identical no matter $r_0 = 0$ or $r_0 =  {\hat{d}}^{-( H - 2)/2}$.
For those state-action pairs $(s, a)$ in the support of $\mu_{H - 1}$ with $s \notin \{s_{H - 1}^+, s_{H - 1}^-, s_{H - 1}^{\hat{d} + 1}\}$, we have
\[
\trans(s, a) = \begin{cases}
s_H^+ & \text{with probability $(1+r_0)/2$}\\ 
s_H^-  & \text{with probability $(1 - r_0)/2$}
\end{cases}.
\]
Again, this is because $(s_{H - 1}^{\hat{d} + 1}, a)$ is not in the support of $\mu_{H - 1}$ for all $a \in \left\{a_{\hat{d} + 1}, a_{\hat{d} + 2}, \ldots, a_d\right\}$.

Therefore, in order to distinguish the case $r_0 = 0$ and $r_0 =  {\hat{d}}^{-( H - 2)/2}$, the agent needs distinguish two transition distributions
\[
p_1 = \begin{cases}
s_H^+ & \text{with probability $1/2$}\\ 
s_H^-  & \text{with probability $1/2$}
\end{cases}
\]
and
\[
p_2 = \begin{cases}
s_H^+ & \text{with probability $(1 +  {\hat{d}}^{-( H - 2)/2})/2$}\\ 
s_H^-  & \text{with probability $(1 -  {\hat{d}}^{-( H - 2)/2})/2$}
\end{cases}.
\]
Again, by Lemma 5.1 in~\citep{anthony2009neural}, in order to distinguish $p_1$ and $p_2$ with probability at least $0.9$, one needs $\Omega({\hat{d}}^{H - 2})$ samples.  
Formally, we have the following theorem.
\begin{theorem}\label{thm:hard_det_2}
Suppose Assumption~\ref{assmp:coverage}
holds, and rewards are deterministic and could be none-zero only for state-action pairs in the last level.
Fix an algorithm that takes as input both a policy and a feature mapping.
There exists an MDP satisfying Assumption~\ref{assmp:realizability_fixed}, such that for a fixed policy $\pi : \states \to \actions$,
the algorithm requires $\Omega((d / 2-1)^{H / 2})$ samples to output
the value of $\pi$ up to constant additive approximation error
with probability at least $0.9$. 
\end{theorem}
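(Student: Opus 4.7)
The plan is to formalize the construction sketched above and reduce the policy-evaluation lower bound to a two-coin distinguishability problem between the instances $r_0 = 0$ and $r_0 = \hat{d}^{-(H-2)/2}$. I would proceed in four steps.

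First, I would fix the policy $\pi(s) = a_d$ and verify the realizability condition (Assumption~\ref{assmp:realizability_fixed}) by backward induction on $h$, writing down $\theta_h$ explicitly as in the preceding discussion. At each level, one checks that $\phi(s,a)^\top \theta_h$ reproduces $\mathbb{E}[R(s,a)] + \mathbb{E}_{s'}[V_{h+1}^\pi(s')]$ pointwise. The nontrivial verification is at the amplifier state $s_h^{\hat{d}+1}$: because its feature under actions $a_{\hat{d}+1},\ldots,a_d$ is the normalized sum $(1/\sqrt{\hat{d}})\sum_{c=1}^{\hat{d}} e_c$, its $Q$-value averages the $Q$-values at $s_h^1,\ldots,s_h^{\hat{d}}$ and then multiplies by $\sqrt{\hat{d}}$. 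This gives the geometric amplification by $\sqrt{\hat{d}}$ per level and, unrolled, $V^\pi = r_0 \cdot \hat{d}^{(H-2)/2}$, which equals $0$ for $r_0=0$ and $1$ for $r_0 = \hat{d}^{-(H-2)/2}$.

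Second, I would verify coverage (Assumption~\ref{assmp:coverage}) by computing $\expect_{(s,a)\sim\mu_h}[\phi(s,a)\phi(s,a)^\top] = I/d$, which follows because the features on the support of $\mu_h$ are exactly the $d$ orthonormal vectors $e_1,\ldots,e_d$, each occurring with probability $1/d$. I would also exhibit the data-generating policy $\pi_{\mathrm{data}}$ and check that its level-$h$ marginal equals $\mu_h$.

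Third, I would carry out the two-coin reduction. The key observation is that the two instances agree on the reward function, the feature map, and the transition distributions at every $(s,a)$ in the support of $\mu_h$ for $h \le H-2$ --- changing $r_0$ only affects transitions from $(s_h^{\hat{d}+1}, a)$ for $a \in \{a_{\hat{d}+1},\ldots,a_d\}$, which are excluded from the support. The only place where the samples carry information about $r_0$ is the level-$(H-1)$ transitions from $(s_{H-1}^c, a_1)$, $c \in [\hat{d}]$, which are Bernoulli over $\{s_H^+, s_H^-\}$ with success probability shifting from $1/2$ to $(1+r_0)/2$. By a standard hypothesis-testing lower bound (Lemma~5.1 in~\citep{anthony2009neural}), distinguishing the two Bernoullis with probability $0.9$ requires $\Omega(r_0^{-2}) = \Omega(\hat{d}^{H-2})$ samples, which is at least the claimed $\Omega(\hat{d}^{H/2})$ bound. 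Since any constant-accuracy estimator of $V^\pi$ must distinguish the two instances, the lower bound transfers.

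The main obstacle is the bookkeeping around the amplifier state, in particular arranging the transitions so that (i) the uncovered actions at $s_h^{\hat{d}+1}$ carry all the signal about $r_0$, (ii) the data-distribution covariance still attains the maximum $I/d$, and (iii) realizability continues to hold globally despite the stochastic transitions. Point (iii) is delicate: it hinges on using $e_{\hat{d}+1}$ and $e_{\hat{d}+2}$ for $s_h^+$ and $s_h^-$, so that their values decouple from the $\{e_c\}_{c\le \hat{d}}$ coordinates carrying the amplification, and on scaling the Bernoulli shift at $s_h^{\hat{d}+1}$ by exactly $\hat{d}^{(H-h)/2}$ so that the expected next-state value matches what a linear predictor with coefficients $r_0 \hat{d}^{(H-h-1)/2}$ on the first $\hat{d}$ coordinates can produce.
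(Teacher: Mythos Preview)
Your proposal is correct and mirrors the paper's own argument almost exactly: the same construction with the amplifier state $s_h^{\hat d+1}$, the same explicit $\theta_h$'s for realizability, the same verification that $\mu_h$ yields covariance $I/d$, and the same reduction to distinguishing Bernoulli$(\tfrac12)$ from Bernoulli$(\tfrac{1+r_0}{2})$ at the level-$(H-1)$ transitions, yielding $\Omega(\hat d^{H-2})$ samples (which dominates the stated $\Omega(\hat d^{H/2})$). The only cosmetic difference is that the paper verifies $Q_h^\pi = \theta_h^\top\phi$ by direct computation at each level rather than framing it as a backward induction, but the content is identical.
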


\section{Analysis of Algorithm~\ref{algo:upper}}\label{sec:upper_proof}
\subsection{Proof of Lemma~\ref{lem:upper_main}}
Clearly, 
\begin{align*}
\hat{\theta}_h
 &= \hat{\Lambda}_h^{-1} \left(\sum_{i = 1}^{N} \phi(s_h^i, a_h^i) \cdot(r_h^i + \hat{V}_{h + 1}(\overline{s}_h^i))\right)\\
 &= \hat{\Lambda}_h^{-1} \left(\sum_{i = 1}^{N} \phi(s_h^i, a_h^i) \cdot(r_h^i + \hat{Q}_{h + 1}(\overline{s}_h^i, \pi(\overline{s}_h^i)))\right)\\ 
&= \hat{\Lambda}_h^{-1} \left(\sum_{i = 1}^{N} \phi(s_h^i, a_h^i) \cdot(r_h^i + \phi(\overline{s}_h^i, \pi(\overline{s}_h^i))^{\top} \hat{\theta}_{h + 1})\right)\\
&= \hat{\Lambda}_h^{-1} \left(
\sum_{i = 1}^{N} \phi(s_h^i, a_h^i) \cdot(r_h^i + \phi(\overline{s}_h^i, \pi(\overline{s}_h^i))^{\top} \theta_{h + 1})
+ 
\sum_{i = 1}^{N} \phi(s_h^i, a_h^i) \cdot \phi(\overline{s}_h^i, \pi(\overline{s}_h^i))^{\top} (\hat{\theta}_{h + 1} - \theta_{h + 1}
)\right)\\
&= \hat{\Lambda}_h^{-1} \left(
\sum_{i = 1}^{N} \phi(s_h^i, a_h^i) \cdot(r_h^i + \phi(\overline{s}_h^i, \pi(\overline{s}_h^i))^{\top} \theta_{h + 1})\right)
+ 
\hat{\Lambda}_h^{-1} \left(
\sum_{i = 1}^{N} \phi(s_h^i, a_h^i) \cdot \phi(\overline{s}_h^i, \pi(\overline{s}_h^i))^{\top} (\hat{\theta}_{h + 1} - \theta_{h + 1}
)\right).
 \end{align*}
 For the first term, we have
 \begin{align*}
 &  \hat{\Lambda}_h^{-1} \left(\sum_{i = 1}^{N} \phi(s_h^i, a_h^i) \cdot(r_h^i + \phi(\overline{s}_h^i, \pi(\overline{s}_h^i))^{\top} \theta_{h + 1})\right) \\
  = &\hat{\Lambda}_h^{-1}  \left(\sum_{i = 1}^{N} \phi(s_h^i, a_h^i) \cdot(r_h^i + Q^{\pi}(\overline{s}_h^i, \pi(\overline{s}_h^i)))\right) \\
 = &\hat{\Lambda}_h^{-1}  \left(\sum_{i = 1}^{N} \phi(s_h^i, a_h^i) \cdot(r_h^i + V^{\pi}(\overline{s}_h^i))\right) \\
 =& \hat{\Lambda}_h^{-1}  \left(\sum_{i = 1}^{N} \phi(s_h^i, a_h^i) \cdot(Q^{\pi}(s_h^i, a_h^i) + \xi_h^i)\right) \\
 =& \hat{\Lambda}_h^{-1} \sum_{i = 1}^{N} \phi(s_h^i, a_h^i) \cdot \xi_h^i +  \hat{\Lambda}_h^{-1} \sum_{i = 1}^{N} \phi(s_h^i, a_h^i) \cdot  \phi(s_h^i, a_h^i)^{\top} \theta_h\\
 =& \hat{\Lambda}_h^{-1} \sum_{i = 1}^{N} \phi(s_h^i, a_h^i) \cdot \xi_h^i +  \hat{\Lambda}_h^{-1}(\Phi_h^{\top}\Phi_h) \theta_h\\
 = &  \hat{\Lambda}_h^{-1} \Phi_h \xi_h + \theta_h - \lambda \hat{\Lambda}_h^{-1}\theta_h.
 \end{align*}
 Therefore,
\begin{align*}
\hat{\theta}_1 - \theta_1
&= (\hat{\Lambda}_1^{-1} \Phi_1 \xi_1 - \lambda \hat{\Lambda}_1^{-1}\theta_1)+ \hat{\Lambda}_1^{-1}\Phi_1^{\top}\overline{\Phi}_{2} (\theta_{2} - \hat{\theta}_{2})\\
&=( \hat{\Lambda}_1^{-1} \Phi_1 \xi_1  - \lambda \hat{\Lambda}_1^{-1}\theta_1)
+ \hat{\Lambda}_1^{-1}\Phi_1^{\top}\overline{\Phi}_{2} (\hat{\Lambda}_{2}^{-1}\Phi_{2}^{\top} \xi_{2} - \lambda \hat{\Lambda}_{2}^{-1}\theta_{2})\\
&+ \hat{\Lambda}_1^{-1}\Phi_1^{\top}\overline{\Phi}_{2} \hat{\Lambda}_{2}^{-1}\Phi_{2}^{\top}\overline{\Phi}_{3}(\theta_{3} - \hat{\theta}_{3})\\
& = \ldots\\
& = \sum_{h = 1}^H \hat{\Lambda}_1^{-1}\Phi_1^{\top}\overline{\Phi}_{2} \hat{\Lambda}_{2}^{-1}\Phi_{2}^{\top} \overline{\Phi}_{3} \cdots  (\hat{\Lambda}_{h}^{-1}\Phi_{h}^{\top} \xi_{h} - \lambda \hat{\Lambda}_{h}^{-1}\theta_{h}).
\end{align*}
Also note that \[(Q^{\pi}(s_1, \pi(s_1)) - \hat{Q}(s_1, \pi(s_1)))^2 =\|\theta_1 - \hat{\theta}_1\|_{\overline{\Lambda}_1}^2.\]
\subsection{Proof of Theorem~\ref{thm:upper}}
By matrix concentration inequality~\citep{tropp2015introduction}, we have the following lemma.
\begin{lemma}\label{lem:concentration}
For each $h \in [H]$, with probability $1 - \delta/(4H)$, for some universal constant $C$, we have
\[
\left\|\frac{1}{N} \Phi_h^{\top}\Phi_h - \Lambda_h\right\|_2 \le C \sqrt{d \log (dH / \delta) / N}.
\]
and
\[
\left\|\frac{1}{N} \overline{\Phi}_{h + 1}\overline{\Phi}_{h + 1} - \overline{\Lambda}_{h + 1}\right\|_2 \le C \sqrt{d \log (dH / \delta) / N}.
\]
Therefore, since $\lambda =  CH \sqrt{d \log (dH / \delta) N} $, with probability $1 - \delta/(4H)$, we have \[\hat{\Lambda}_{h} = \Phi_h^{\top}\Phi_h + \lambda I \succeq N \Lambda_{h}.\]
\end{lemma}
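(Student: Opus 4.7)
The plan is to prove the two operator-norm concentration bounds via a standard matrix concentration inequality (matrix Bernstein or matrix Hoeffding) applied to an i.i.d.\ sum of bounded, rank-one, positive semidefinite random matrices, and then to deduce the PSD lower bound $\hat\Lambda_h \succeq N\Lambda_h$ as a purely algebraic consequence. Because the two stated inequalities have the identical structure, I would write the argument once in a generic form for an i.i.d.\ sequence $X_1,\dots,X_N \in \mathbb{R}^d$ with $\|X_i\|_2\le 1$, and then instantiate it twice: once with $X_i = \phi(s_h^i,a_h^i)$ drawn from $\mu_h$, and once with $X_i = \phi(\overline s_h^i, \pi(\overline s_h^i))$, whose distribution is the one-step push-forward of $\mu_h$ under $\trans$ and $\pi$ (so the samples are still i.i.d.).

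First I would verify the hypotheses for matrix Bernstein. Let $Y_i = X_iX_i^\top - \mathbb{E}[X_iX_i^\top]$. Since $\|X_i\|_2\le 1$ by Assumption~\ref{assmp:coverage}, each $X_iX_i^\top$ is PSD with operator norm at most $1$, so $\|Y_i\|_2 \le 2$ almost surely. For the matrix variance, $\mathbb{E}[Y_i^2] \preceq \mathbb{E}[(X_iX_i^\top)^2] = \mathbb{E}[\|X_i\|_2^2\, X_iX_i^\top] \preceq \mathbb{E}[X_iX_i^\top]$, whose operator norm is at most $1$. Matrix Bernstein then yields, with probability at least $1-\delta/(4H)$,
\[
\Bigl\|\tfrac{1}{N}\sum_{i=1}^N Y_i\Bigr\|_2 \le C\sqrt{\tfrac{\log(d H/\delta)}{N}} + C\tfrac{\log(d H/\delta)}{N},
\]
which in the regime where $N \gtrsim \log(dH/\delta)$ is dominated by $C\sqrt{\log(dH/\delta)/N}$, and is in any case upper bounded by $C\sqrt{d\log(dH/\delta)/N}$ as stated (this slack absorbs any constant and the additive term). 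Applying this with $X_i = \phi(s_h^i,a_h^i)$ gives the first inequality, and applying it with $X_i = \phi(\overline s_h^i,\pi(\overline s_h^i))$ gives the second.

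The only delicate point to flag is ensuring that the second sample collection really is i.i.d. with the claimed population covariance $\overline{\Lambda}_{h+1}$. This follows because the pairs $(s_h^i,a_h^i)\sim \mu_h$ are independent across $i$, and $\overline s_h^i\sim \trans(\cdot\mid s_h^i,a_h^i)$ is drawn independently conditional on $(s_h^i,a_h^i)$; hence $\phi(\overline s_h^i,\pi(\overline s_h^i))$ are i.i.d.\ with mean outer product equal to $\overline{\Lambda}_{h+1}$ by definition. No further independence assumption is needed.

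Finally, the PSD lower bound is a one-line deduction from the first operator-norm inequality. On the good event, $\tfrac{1}{N}\Phi_h^\top\Phi_h \succeq \Lambda_h - C\sqrt{d\log(dH/\delta)/N}\,I$, so $\Phi_h^\top\Phi_h \succeq N\Lambda_h - C\sqrt{N d \log(dH/\delta)}\,I$. Adding $\lambda I$ with $\lambda = C H\sqrt{N d\log(dH/\delta)}$ yields $\hat\Lambda_h \succeq N\Lambda_h + (CH-C)\sqrt{N d\log(dH/\delta)}\,I \succeq N\Lambda_h$ for $H\ge 1$, which is the claimed conclusion. I do not anticipate a real obstacle here; the main thing to be careful about is simply to match the stated form of the bound $C\sqrt{d\log(dH/\delta)/N}$ (which is looser than what matrix Bernstein gives and therefore is easy to achieve by choosing the constant $C$ appropriately and absorbing the lower-order $\log(dH/\delta)/N$ term).
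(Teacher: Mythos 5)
Your proposal is correct and takes the same route as the paper, which simply invokes a matrix concentration inequality (Tropp, 2015) without spelling out the details; you fill in exactly the intended steps (matrix Bernstein on the centered rank-one sums with norm bound $2$ and variance proxy $\preceq \mathbb{E}[X_iX_i^\top]$, the observation that the one-step lookahead features are i.i.d.\ with second moment $\overline{\Lambda}_{h+1}$, and the algebraic deduction of $\hat\Lambda_h \succeq N\Lambda_h$ from the choice of $\lambda$). The only caveats are cosmetic: the stated bound is looser than what Bernstein gives (the extra $\sqrt{d}$ absorbs the additive $\log(dH/\delta)/N$ term under a mild condition on $N$), which you correctly flag.
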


Note that
\begin{align*}
&(Q^{\pi}(s_1, \pi(s_1)) - \hat{Q}(s_1, \pi(s_1)))^2\\
\le& H \cdot  \left(\sum_{h = 1}^H \left\|\hat{\Lambda}_1^{-1}\Phi_1^{\top}\overline{\Phi}_{2} \hat{\Lambda}_{2}^{-1}\Phi_{2}^{\top} \overline{\Phi}_{3} \cdots  (\hat{\Lambda}_{h}^{-1}\Phi_{h}^{\top} \xi_{h} - \lambda \hat{\Lambda}_{h}^{-1}\theta_{h})\right\|_{\overline{\Lambda}_1}^2\right)\\
\le& 2H \cdot  
\left(
\sum_{h = 1}^H \left\|\hat{\Lambda}_1^{-1}\Phi_1^{\top}\overline{\Phi}_{2} \hat{\Lambda}_{2}^{-1}\Phi_{2}^{\top} \overline{\Phi}_{3} \cdots  \hat{\Lambda}_{h}^{-1}\Phi_{h}^{\top} \xi_{h} \right\|_{\overline{\Lambda}_1}^2
+
\sum_{h = 1}^H \left\|\hat{\Lambda}_1^{-1}\Phi_1^{\top}\overline{\Phi}_{2} \hat{\Lambda}_{2}^{-1}\Phi_{2}^{\top} \overline{\Phi}_{3} \cdots   \lambda \hat{\Lambda}_{h}^{-1}\theta_{h}\right\|_{\overline{\Lambda}_1}^2
\right).
\end{align*}
For each $h \in [H]$, 
\begin{align*}
&\|\hat{\Lambda}_1^{-1}\Phi_1^{\top}\overline{\Phi}_{2} \hat{\Lambda}_{2}^{-1}\Phi_{2}^{\top} \overline{\Phi}_{3} \cdots  \hat{\Lambda}_{h}^{-1}\Phi_{h}^{\top} \xi_{h} \|_{\overline{\Lambda}_1}^2\\
\le & \|\Phi_1\hat{\Lambda}_1^{-1}\overline{\Lambda}_1\hat{\Lambda}_1^{-1}\Phi_1^{\top}\|_2 \cdot \|\overline{\Phi}_{2}\hat{\Lambda}_{2}^{-1}\Phi_{2}^{\top} \overline{\Phi}_{3} \cdots  \hat{\Lambda}_{h}^{-1}\Phi_{h}^{\top} \xi_{h}\|_2^2\\
\le & \|\hat{\Lambda}_1^{-1/2}\overline{\Lambda}_1\hat{\Lambda}_1^{-1/2}\|_2 \cdot  \|\Phi_1\hat{\Lambda}_1^{-1}\Phi_1^{\top}\|_2\cdot \|\overline{\Phi}_{2}\hat{\Lambda}_{2}^{-1}\Phi_{2}^{\top} \overline{\Phi}_{2} \cdots  \hat{\Lambda}_{h}^{-1}\Phi_{h}^{\top} \xi_{h} \|_2^2\\
\le & \|\hat{\Lambda}_1^{-1/2}\overline{\Lambda}_1\hat{\Lambda}_1^{-1/2}\|_2 \cdot  \prod_{h' = 1}^{h - 1} \left(\|\Phi_{h'}\hat{\Lambda}_{h'}^{-1}\Phi_{h'}^{\top}\|_2 \cdot \|\hat{\Lambda}_{h' + 1}^{-1/2}(\overline{\Phi}_{h' + 1}^{\top}\overline{\Phi}_{h' + 1})\hat{\Lambda}_{h' + 1}^{-1/2}\|_2\right)
\cdot \|\xi_h\|_{\Phi_{h} \hat{\Lambda}_{h}^{-1} \Phi_{h}^{\top}}^2.
\end{align*}
Similarly,
\begin{align*}
&\|\hat{\Lambda}_1^{-1}\Phi_1^{\top}\overline{\Phi}_{2} \hat{\Lambda}_{2}^{-1}\Phi_{2}^{\top} \overline{\Phi}_{3} \cdots   \lambda \hat{\Lambda}_{h}^{-1} \theta_h \|_{\overline{\Lambda}_1}^2\\
\le & \|\hat{\Lambda}_1^{-1/2}\overline{\Lambda}_1\hat{\Lambda}_1^{-1/2}\|_2 \cdot  \prod_{h' = 1}^{h - 1} \left(\|\Phi_{h'}\hat{\Lambda}_{h'}^{-1}\Phi_{h'}^{\top}\|_2 \cdot \|\hat{\Lambda}_{h' + 1}^{-1/2}(\overline{\Phi}_{h' + 1}^{\top}\overline{\Phi}_{h' + 1})\hat{\Lambda}_{h + 1}^{-1/2}\|_2\right)
\cdot \lambda^2 \cdot \|\theta_h\|_{\hat{\Lambda}_{h}^{-1}}^2\\
\le & \|\hat{\Lambda}_1^{-1/2}\overline{\Lambda}_1\hat{\Lambda}_1^{-1/2}\|_2 \cdot  \prod_{h' = 1}^{h - 1} \left(\|\Phi_{h'}\hat{\Lambda}_{h'}^{-1}\Phi_{h'}^{\top}\|_2 \cdot \|\hat{\Lambda}_{h' + 1}^{-1/2}(\overline{\Phi}_{h' + 1}^{\top}\overline{\Phi}_{h' + 1})\hat{\Lambda}_{h' + 1}^{-1/2}\|_2\right)
\cdot \lambda \cdot H^2d.
\end{align*}

For all $h \in [H]$, we have
\[
 \|\Phi_{h} \hat{\Lambda}_{h}^{-1} \Phi_{h}^{\top}\|_2  \le 1
\]
and
\[
\|\hat{\Lambda}_{h}^{-1/2}(\overline{\Phi}_{h}^{\top}\overline{\Phi}_{h})\hat{\Lambda}_{h}^{-1/2}\|_2 \le \|N\hat{\Lambda}_{h}^{-1/2} \overline{\Lambda}_{h}\hat{\Lambda}_{h}^{-1/2}\|_2 + \|\hat{\Lambda}_{h}^{-1/2}(\overline{\Phi}_{h}^{\top}\overline{\Phi}_{h} - N \overline{\Lambda}_{h})\hat{\Lambda}_{h}^{-1/2}\|_2.
\]
Conditioned on the event in Lemma~\ref{lem:concentration}, 
\[
\hat{\Lambda}_{h} \succeq N \Lambda_h \succeq \frac{N}{C_{h}} \overline{\Lambda}_{h},
\]
which implies $\|N\hat{\Lambda}_{h}^{-1/2} \overline{\Lambda}_{h}\hat{\Lambda}_{h}^{-1/2}\| \le C_{h}$.
Moreover, conditioned on the event in Lemma~\ref{lem:concentration},
\[
 \|\hat{\Lambda}_{h}^{-1/2}(\overline{\Phi}_{h}^{\top}\overline{\Phi}_{h} - N \overline{\Lambda}_{h})\hat{\Lambda}_{h}^{-1/2}\|_2 \le  C \sqrt{d \log (dH / \delta) N} /\lambda.
 \]
 Thus,
 \[
 \|\hat{\Lambda}_1^{-1/2}\overline{\Lambda}_1\hat{\Lambda}_1^{-1/2}\|_2 \le C_1 / N.
 \]
 and 
 \[
 \|\hat{\Lambda}_{h}^{-1/2}(\overline{\Phi}_{h}^{\top}\overline{\Phi}_{h})\hat{\Lambda}_{h}^{-1/2}\|_2  \le  C_{h} + C \sqrt{d \log (dH / \delta) N} /\lambda.
 \]
 Finally, by Theorem 1.2 in~\citep{hsu2012tail}, with probability $1 - \delta/(4H)$, for some constant $C'$, we have
 \[
\|\xi_h\|_{\Phi_{h} \hat{\Lambda}_{h}^{-1} \Phi_{h}^{\top}}^2\le C' H^2d\log(H / \delta).
 \]
 Therefore,
 \begin{align*}
&  \left\|\hat{\Lambda}_1^{-1}\Phi_1^{\top}\overline{\Phi}_{2} \hat{\Lambda}_{2}^{-1}\Phi_{2}^{\top} \overline{\Phi}_{3} \cdots  \hat{\Lambda}_{h}^{-1}\Phi_{h}^{\top} \xi_{h} \right\|_{\overline{\Lambda}_1}^2
+
\left\|\hat{\Lambda}_1^{-1}\Phi_1^{\top}\overline{\Phi}_{2} \hat{\Lambda}_{2}^{-1}\Phi_{2}^{\top} \overline{\Phi}_{3} \cdots   \lambda \hat{\Lambda}_{h}^{-1}\theta_{h}\right\|_{\overline{\Lambda}_1}^2\\
 \le &\frac{C_1}{N} (C_{2} + C \sqrt{d \log (d / \delta) N} /\lambda) \times \cdots \times (C_h + C \sqrt{d \log (d / \delta) N} /\lambda) \times (C' H^2d\log(H / \delta) +  \lambda H^2d)\\
 \le &\frac{C_1}{N} (C_{2} + 1 / H) \times \cdots \times (C_h + 1 / H) \times (C' H^2d\log(H / \delta) +  \lambda H^2d)\\
 \le & \frac{e}{N} C_1 \times C_2 \times \cdots \times C_{h} \times (C' H^2d\log(H / \delta) + CdH^3 \sqrt{d \log (d H/ \delta) N} ).
\end{align*}
Let $c > 0$ be a large enough constant. 
We now have
\[
\expect_{s_1}[(Q^{\pi}_1(s_1, \pi(s_1)) - \hat{Q}_1(s_1, \pi(s_1)))^2] \le c \cdot \left(\prod_{h = 1}^H C_h\right) \cdot dH^5 \cdot \sqrt{\frac{d \log (d H/ \delta)}{N}} .
\]

\end{document}